\newtheorem{theorem}{Theorem}[section]
\newtheorem{lemma}[theorem]{Lemma}
\newtheorem{proposition}[theorem]{Proposition}
\newtheorem{definition}[theorem]{Definition}
\newtheorem{example}[theorem]{Example}
\newtheorem{claim}{Claim}[theorem]
\newenvironment{claimproof}[1]{\par\noindent\textit{Proof of claim \theclaim:}\space#1}{\hfill $\blacksquare$}
\newcommand{\Kh}{{\ensuremath{\mathcal{K}h}}}
\newcommand{\K}{{\ensuremath{\mathcal{K}}}}
\newcommand{\PROP}{\textmd{PROP}}
\newcommand{\ACT}{\textmd{ACT}}
\newcommand{\lr}[1]{\langle #1 \rangle}
\newcommand{\lra}{\leftrightarrow}
\newcommand{\rel}[1]{\xrightarrow{#1}}
\newcommand{\calM}{\mathcal{M}}
\newcommand{\M}{\calM}
\newcommand{\calV}{\mathcal{V}}
\newcommand{\V}{\calV}
\newcommand{\llrr}[1]{\llbracket #1\rrbracket}
\newcommand{\EQ}[1]{{\ensuremath{[W]}}}
\newcommand{\LeafNode}{\textsf{leaf-node}}
\newcommand{\InnerNode}{\textsf{inner-node}}
\newcolumntype{L}{>{$}l<{$}} 
\newcommand{\TAUT}{\ensuremath{\mathtt{TAUT}}}
\newcommand{\NECK}{\ensuremath{\mathtt{NECK}}}
\newcommand{\DISTK}{\ensuremath{\mathtt{DISTK}}}
\newcommand{\AxTrK}{\ensuremath{\mathtt{T}}}
\newcommand{\AxTransK}{\ensuremath{\mathtt{4}}}
\newcommand{\AxEucK}{\ensuremath{\mathtt{5}}}
\newcommand{\SUB}{\ensuremath{\mathtt{SUB}}}
\newcommand{\MP}{\ensuremath{\mathtt{MP}}}
\newcommand{\EQREPKh}{\ensuremath{\mathtt{MONOKh}}}
\newcommand{\AxKtoKh}{\ensuremath{\mathtt{AxKtoKh}}}
\newcommand{\AxKhtoKhK}{\ensuremath{\mathtt{AxKhtoKhK}}}
\newcommand{\AxKhtoKKh}{\ensuremath{\mathtt{AxKhtoKKh}}}
\newcommand{\AxKhKh}{\ensuremath{\mathtt{AxKhKh}}}
\newcommand{\AxKhbot} {\ensuremath{\mathtt{AxKhbot}}}
\newcommand{\SKh}{\mathbb{SKH}}
\newcommand{\CELeafN}{\ensuremath{\mathtt{CELeaf}}}
\newcommand{\CEInnerN}{\ensuremath{\mathtt{CEInner}}}
\newcommand{\Dom}{\ensuremath{\mathtt{dom}}}
\newcommand{\cl}[1]{cl(#1)}
\newcommand{\SFC}{\Phi}
\title{Strategically Knowing How}
\author{Raul Fervari \\  Universidad Nacional de \\
    C\'ordoba, and CONICET,\\ Argentina \And Andreas Herzig \\  University of Toulouse,\\ IRIT, Toulouse, France\And Yanjun Li \\  University of Groningen\\ The Netherlands \And Yanjing Wang\thanks{corresponding author, \texttt{y.wang@pku.edu.cn}} \\  Peking University, \\ China}
\begin{document}
\maketitle
\begin{abstract}
In this paper, we propose a single-agent logic of goal-directed knowing how extending the standard epistemic logic of knowing that with a new knowing how operator. The semantics of the new operator is based on the idea that knowing how to achieve $\phi$ means that there exists a (uniform) strategy such that the agent knows that it can make sure $\phi$. We give an intuitive axiomatization of our logic and prove the soundness, completeness and decidability of the logic. The crucial axioms relating knowing that and knowing how illustrate our understanding of knowing how in this setting. This logic can be used in representing both knowledge-that and knowledge-how. 
\end{abstract}


\section{Introduction}

Standard epistemic logic focuses on reasoning about propositional knowledge expressed by \textit{knowing that} $\phi$ \cite{Hintikka:kab}. However, in natural language, various other knowledge expressions are also frequently used, such as \textit{knowing what}, \textit{knowing how}, \textit{knowing why}, and so on. 

In particular, \textit{knowing how} receives much attention in both philosophy and AI. Epistemologists debate about whether knowledge-how is also propositional knowledge \cite{Fantl08}, e.g., whether \textit{knowing how to swim} can be rephrased using \textit{knowing that}. In AI, it is crucial to let autonomous agents \textit{know how} to fulfill certain goals in robotics, game playing, decision making, and multi-agent systems. In fact, a large body of AI planning can be viewed as finding algorithms to let the autonomous planner \textit{know how} to achieve some propositional goals, i.e., to obtain goal-directed knowledge-how \cite{Gochet13}. Here, both propositional knowledge and knowledge-how matter, especially in the planning problems where initial uncertainty and non-deterministic actions are present. From a logician's point of view, it is interesting to see how \textit{knowing how} interact with \textit{knowing that}, and how they differ in their reasoning patterns. A logic of knowing how also helps us to find a consistency notion regarding knowledge database with knowing how expressions. 
\begin{example}\label{ex.pain}
Consider the scenario where a doctor needs a plan to treat a patient and cure his pain ($p$), under the uncertainty about some possible allergy ($q$). If there is no allergy ($\neg q$) then simply taking some pills can cure the pain, and the surgery is not an option. On the other hand, in presence of the allergy, the pills may cure the pain or have no effect at all, while the surgery can cure the pain for sure. The model from Figure~\ref{fig.curepain} represents this scenario with an additional action of testing whether $q$. The dotted line represents the initial uncertainty about $q$, and the test on $q$ can eliminate this uncertainty (there is no dotted line between $s_3$ and $s_4$). 
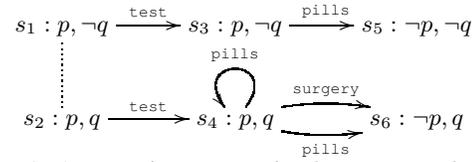
\begin{figure}
{\small \[\xymatrix{
s_1:p,\neg q\ar[r]^{\texttt{test}}\ar@{.}[d] & s_3:p,\neg q\ar[r]^{\texttt{pills}} & s_5:\neg p,\neg q\\
s_2:p,q\ar[r]^{\texttt{test}} & s_4:p,q \ar@/^/[r]^{\texttt{surgery}}\ar@(ur,ul)_{\texttt{pills}}\ar@/_/[r]_{\texttt{pills}}& s_6:\neg p, q 
}\]
} 
\vspace{-20pt}
\caption{A scenario representing how to cure the pain.}
\label{fig.curepain}
\end{figure}
According to the model, to cure the pain (guarantee $\neg p$) at the end, it makes sense to take the surgery if the result of the test of whether $q$ is positive and take the pills otherwise. We can say the doctor in this case \textit{knows how} to cure the pain. 

\end{example}

How to formalize the knowledge-how of the agent in such scenarios with uncertainty? Already since the early days of AI, people have been looking at it in the setting of logics of knowledge and action \cite{Mccarthy69,McCarthy79,Moore85,Les00,Hoek00}. However, there has been no consensus on how to capture the logic of  ``knowing how'' formally (cf. the recent surveys \cite{Gochet13} and \cite{KandA15}). The difficulties are well discussed in \cite{JamrogaA07} and \cite{Herzig15} and simply combining the existing modalities for ``knowing that'' and ``ability'' in a logical language like ATEL \cite{EATL} does not lead to a genuine notion of ``knowing how'', e.g., knowing how to achieve $p$ is not equivalent to knowing that there exists a strategy to make sure $p$. It does not work even when we replace the strategy by \textit{uniform strategy} where the agent has to choose the same action on indistinguishable states \cite{JamrogaA07}. Let $\phi(x)$ express that $x$ is a way to make sure some goal is achieved, and let $\K$ be the standard knowledge-that modality. There is a crucial distinction between the \textit{de dicto} reading of knowing how ($\K\exists x \phi(x)$) and the desired \textit{de re} reading ($\exists x \K \phi(x)$) endorsed also by linguists and philosophers \cite{stanley2001knowing}. The latter implies the former, but not the other way round. For example, consider a variant of Example 1.1 where no test is available: then the doctor has \textit{de dicto} knowledge-how to cure, but not the \textit{de re} one.  Proposals to capture the \textit{de re} reading have been discussed in the literature,  such as making the knowledge operator more constructive \cite{JamrogaA07}, making the strategy explicitly specified \cite{HerzigLW13,Belardinelli14}, or inserting $\K$ in-between an existential quantifier and the ability modality in see-to-that-it (STIT) logic \cite{Broersen2015}.

In \cite{Wang15lori,Wang2016}, a new approach is proposed by introducing a single new modality $\Kh$ of (conditional) goal-directed knowing how, instead of breaking it down into other modalities. This approach is in line with other \textit{de re} treatments of non-standard epistemic logics of knowing whether, knowing what and so on (cf.\ \cite{Wang16} for a survey). The semantics of $\Kh$ is inspired by the idea of conformant planning based on linear plans \cite{SW98,YLW15}. It is shown that $\Kh$ is not a normal modality, e.g, knowing how to get drunk and knowing how to drive does not entail knowing how to drive when drunk. The work is generalized further in \cite{LiWang17,Li17}. However, in these previous works, there was no explicit knowing that modality $\K$ in the language and the semantics of $\Kh$ is based on linear plans, which does not capture the broader notion allowing branching plans or strategies that are essential in the scenarios like Example \ref{ex.pain}. 

In this paper, we extend this line of work largely in the following aspects: 
\begin{itemize}
\item Both the \textit{knowing how} modality $\Kh$ and \textit{knowing that} modality $\K$ are in the language. 
\item In contrast to the state-independent semantics \cite{Wang15lori,Wang2016}, we interpret $\Kh$  \textit{locally} w.r.t.\ the current uncertainty.
\item Instead of linear plans in \cite{Wang15lori}, the semantics of our $\Kh$ operator is based on strategies (branching plans).
\end{itemize}
The intuitive idea behind our semantics of $\Kh$ is that the agent knows how to achieve $\phi$ iff (s)he has an executable uniform strategy $\sigma$ such that the agent knows that:  
\begin{itemize}
\item $\sigma$ guarantees $\phi$ in the end given the uncertainty; 
\item $\sigma$ always terminates after finitely many steps.
\end{itemize}
Note that for an agent to know how to make sure $\phi$, it is not enough to find a plan which works \textit{de facto}, but the agent should know it works in the end. This is a strong requirement inspired by planning under uncertainty, where the collection of final possible outcomes after executing the plan is required to be a subset of the collection of the goal states \cite{geffner2013concise}. 

Technically, our contributions are summarized as follows: 
\begin{itemize}
\item A logical language with both $\Kh$ and $\K$ operators with a semantics which fleshes out formally the above intuitions about knowing how.
\item A complete axiomatization with intuitive axioms.
\item Decidability of our logic. 
\end{itemize}
The paper is organized as follows: Section~\ref{sec.ls} lays out the language and semantics of our framework; Section~\ref{sec.ax} proposes the axiomatization and proves its soundness; We prove the completeness of our proof system and show the decidability of the logic in Section~\ref{sec.proof} before we conclude with future work. 

\section{Language and Semantics}\label{sec.ls}
Let \PROP\ be a countable set of propositional symbols.
\begin{definition}[Language]The language is defined by the following BNF where $p\in \PROP$:
	$$\phi:= p\mid \neg\phi\mid (\phi\land\phi)\mid \K\phi\mid \Kh{\phi}. $$
We use $\bot, \lor, \to$ as usual abbreviations and write $\hat{\K}$ for $\neg\K\neg$.  
\end{definition}
\begin{definition}[Models]
A model $\M$ is a quintuple $\lr{W,\ACT,\sim,\{\rel{a}\mid a\in \ACT \},V}$ where:
	\begin{itemize}
		\item $W$ is a non-empty set,
        \item \ACT\ is a set of actions,
		\item $\sim\ \subseteq {W\times W}$ is an equivalence relation on $W$,
		\item $\rel{a}\ \subseteq {W\times W}$ is a binary relation on $W$, and 
		\item $V:W\to 2^\PROP$ is a valuation.
	\end{itemize}
\end{definition}
Note that the labels in \ACT\ do not appear in the language. The graph in Example \ref{ex.pain} represents a model with omitted self-loops of $\sim$ (dotted lines), and the equivalence classes induced by $\sim$ are $\{s_1,s_2\}, \{s_3\}, \{s_4\}, \{s_5\}, \{s_6\}$. In this paper we do not require any properties between $\sim$ and $\rel{a}$ to lay out the most general framework. We will come back to particular assumptions like \textit{perfect recall} at the end of the paper. Given a model and a state $s$, if there exists $t$ such that $s\rel{a}t$, we say that $a$ is \emph{executable} at $s$. Also note that the actions can be non-deterministic. For each $s\in W$, we use $[s]$ to denote the equivalence class $\{t\in W\mid s\sim t \}$, and use $\EQ{W}$ to denote the collection of all the equivalence classes on $W$ w.r.t.\ $\sim$. We use $[s]\rel{a}[t]$ to indicate that there are $s'\in[s]$ and $t'\in[t]$ such that $s'\rel{a}t'$. If there is $t\in W$ such that $[s]\rel{a}[t]$, we say $a$ is executable at $[s]$. 
\begin{definition}[Strategies] Given a model, a (uniformly executable) strategy is a partial function $\sigma:[W]\to \ACT$ such that $\sigma([s])$ is executable at all $s'\in[s]$. Particularly, the empty function is also a strategy, the \emph{empty strategy}.
\end{definition}
Note that the executability is as crucial as uniformity, without which the knowledge-how may be trivialized. We use $\Dom(\sigma)$ to denote the domain of $\sigma$. Function $\sigma$ can be seen as a binary relation such that $([s],a),([s],b)\in \sigma$ implies $a=b$. Therefore, if $\tau$ is a restriction of $\sigma$, i.e.\ $\tau\subseteq \sigma$, it follows that $\Dom(\tau)\subseteq \Dom(\sigma)$, and $\tau([s])=\sigma([s])$ for all $[s]\in\Dom(\tau)$. 

\begin{definition}[Executions]
	 Given a strategy $\sigma$ w.r.t a model $\M$, a \emph{possible execution} of $\sigma$ is a possibly infinite sequence of equivalence classes $\delta=[s_0][s_1]\cdots$ such that $[s_{i}]\rel{\sigma([s_{i}])}[s_{i+1}]$ for all $0\leq i< |\delta|$.
	 Particularly, $[s]$ is a possible execution if $[s]\not\in\Dom(\sigma)$. If the execution is a finite sequence $[s_0]\cdots[s_n]$, we call $[s_n]$ the \LeafNode, and $[s_i](0\leq i< n)$ an \InnerNode\ w.r.t. this execution. If it is infinite, then all $[s_i] (i\in\mathbb{N})$ are \InnerNode s. A possible execution  of $\sigma$ is \emph{complete} if it is infinite or its \LeafNode\ is not in $\Dom(\sigma)$. 
\end{definition}
Given $\delta=[s_0]\cdots [s_n]$ and $\mu=[t_0]\cdots[t_m]$, we use $\delta\sqsubseteq \mu$ to denote that $\mu$ \textit{extends} $\delta$, i.e.,  $n\leq m$ and $[s_i]=[t_i]$ for all $0\leq i\leq n$. If $\delta\sqsubseteq \mu$, we define $\delta\sqcup\mu=\mu$.
We use $\CELeafN(\sigma,s)$ to denote the set of all  \LeafNode s of all the $\sigma$'s complete executions (can be many due to non-determinism) starting from $[s]$, and $\CEInnerN(\sigma,s)$ to denote the set of all the \InnerNode s of $\sigma$'s complete executions starting from $[s]$. $\CELeafN(\sigma,s)\cap\CEInnerN(\sigma,s)=\emptyset$ since if $[s]$ is a \LeafNode\ of a complete execution  then $\sigma$ is not defined at $[s]$. 


%


\begin{definition}[Semantics] 
Given a pointed model $\M,s$, the satisfaction relation $\vDash$ is defined as follows:
	\begin{tabular}{LLL}
	\M,s\vDash p &\iff& p\in \V(s)\\
	\M,s\vDash \neg\phi &\iff& \M,s\nvDash \phi\\
	\M,s\vDash \phi\wedge\psi &\iff& \M,s\vDash\phi \text{ and } \M,s\vDash \psi\\
	\M,s\vDash \K{\phi} &\iff& \text{for all }s':s{\sim}s'\text{ implies }\M,s'\vDash\phi\\
	\M,s\vDash \Kh{\phi} &\iff& \text{there exists a strategy } \sigma\text{ such that }\\
    &&1. [t]{\subseteq}\llrr{\phi} \text{ for all }[t]{\in}\CELeafN(\sigma,s)\\
    &&2.\text{ all its complete executions}\\ 
    && \text{ starting from $[s]$ are finite,}
		\\
	\end{tabular}
where $\llrr{\phi}=\{s\in W\mid \M,s\vDash\phi \}$.
\end{definition}
Note that the two conditions for $\sigma$ in the semantics of $\Kh$ reflect our two intuitions mentioned in the introduction. The implicit role of $\K$ in $\Kh$ will become more clear when the axioms are presented. 
Going back to Example \ref{ex.pain}, we can verify that $\Kh \neg p$ holds on $s_1$ and $s_2$ due to the strategy $\sigma=\{\{s_1,s_2\}\mapsto \texttt{test}, \{s_3\}\mapsto \texttt{pills}, \{s_4\}\mapsto \texttt{surgery}\}$. Note that $\CELeafN(\sigma,s)=\{[s_5], [s_6]\}=\{\{s_5\}, \{s_6\}\}$ and $\llrr{\neg p}=\{s_5, s_6\}$. On the other hand, $\Kh \neg q$ is not true on  $s_1$: although the agent can guarantee $\neg q$ \textit{de facto} on $s_1$ by taking a strategy such that $\{s_1,s_2\}\mapsto \texttt{test}$ and $\{s_3\}\mapsto \texttt{pills}$, he cannot know it beforehand since nothing works at $s_2$ to make sure $\neg q$. Readers may also verify that $\Kh(p\lra q)$ holds at $s_1$ and $s_2$ (hint: a strategy is a partial function). 

\section{Axiomatization}
\label{sec.ax}
\subsection{The proof system}
\begin{center}
	\begin{tabular}{lc}
		\multicolumn{2}{c}{System $\SKh$}\\
		{\textbf{Axioms}}&\\
		\TAUT & \text{all axioms of propositional logic}\\
		\DISTK & $\K p\land\K (p\to q)\to \K q$\\
		\AxTrK& $\K p\to p $ \\
		\AxTransK& $\K p\to\K\K p$\\
		\AxEucK& $\neg \K p\to\K\neg\K p$\\
		\AxKtoKh &$\K p \to \Kh p$ \\	
		\AxKhtoKhK&$\Kh p \to \Kh\K p$  \\
		\AxKhtoKKh&$\Kh p \to \K\Kh p$  \\	
		\AxKhKh&$\Kh \Kh p \to \Kh p$  \\
		\AxKhbot&$\Kh \bot \to \bot$  \\
\end{tabular}
	\begin{tabular}{lclc}
\textbf{Rules:}\\
 \MP & $\dfrac{\varphi,\varphi\to\psi}{\psi}$&\NECK &$\dfrac{\varphi}{\K\varphi}$\\
 \EQREPKh& $\dfrac{\varphi\to\psi}{\Kh\varphi\to\Kh\psi}$ & \SUB & $\dfrac{\varphi(p)}{\varphi[\psi\slash p]}$
\end{tabular}
\end{center}
Note that we have $\mathbb{S}5$ axioms for $\K$. $\AxKtoKh$ says if $p$ is known then you know how to achieve $p$ by doing nothing (we allow the empty strategy). $\AxKhtoKhK$ reflects the first condition in the semantics that the goal is known after the executions. We will come back to this axiom at the end of the paper. Note that the termination condition is not fully expressible in our language but \AxKhbot\ captures part of it by ruling out strategies that have no terminating executions at all. \AxKhKh\ essentially says that the strategies can be composed. Its validity is quite involved, to which we devote the next subsection.  Finally, $\AxKhtoKKh$ is the positive introspection axiom for $\Kh$, whose validity is due to uniformity of the strategies on indistinguishable states. The corresponding negative introspection can be derived by using \AxKhtoKKh, \AxEucK\ and \AxTrK: 
\begin{proposition}
	$\vdash\neg\Kh p \to \K\neg\Kh p$.
\end{proposition}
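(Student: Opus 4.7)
The plan is to derive this by a standard contrapositive/chaining argument, using $\AxKhtoKKh$ together with the S5 properties of $\K$. The key intermediate formula will be $\K\neg\K\Kh p$: I will show that $\neg\Kh p$ implies it, and that it implies $\K\neg\Kh p$.

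First, I would establish $\neg\Kh p\to\neg\K\Kh p$. To do this, apply $\SUB$ to $\AxTrK$ (substituting $\Kh p$ for $p$) to get $\K\Kh p\to\Kh p$, and contrapose. Next, apply $\SUB$ to $\AxEucK$ (again substituting $\Kh p$ for $p$) to get $\neg\K\Kh p\to\K\neg\K\Kh p$. Chaining these two by propositional reasoning yields
\[
\neg\Kh p\;\to\;\K\neg\K\Kh p. \tag{$\ast$}
\]

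Second, I would push the negation one step further using $\AxKhtoKKh$. Its contrapositive is $\neg\K\Kh p\to\neg\Kh p$. Applying $\NECK$ followed by $\DISTK$ (and $\MP$) gives
\[
\K\neg\K\Kh p\;\to\;\K\neg\Kh p. \tag{$\dagger$}
\]
Combining $(\ast)$ and $(\dagger)$ by propositional reasoning yields the desired $\neg\Kh p\to\K\neg\Kh p$.

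No step is genuinely hard; the only thing to watch is that $\AxEucK$ and $\AxTrK$ are stated for an arbitrary proposition letter $p$, so the instantiations with $\Kh p$ must go through the $\SUB$ rule rather than through any schema-matching convention. Apart from that bookkeeping, the derivation is a routine piece of modal-propositional manipulation.
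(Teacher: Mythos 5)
Your derivation is correct and is essentially identical to the paper's own proof: both route through the intermediate formula $\K\neg\K\Kh p$, using $\AxTrK$ and $\AxEucK$ (instantiated at $\Kh p$) on one side and the contrapositive of $\AxKhtoKKh$ lifted by $\NECK$ and $\DISTK$ on the other. Your explicit attention to the $\SUB$ rule is a minor bookkeeping refinement the paper leaves implicit.
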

\begin{proof}
~\\
$\begin{array}{lll}
	(1) & \neg\K\Kh p\to \neg\Kh p &{\AxKhtoKKh}\\
	(2) & \K\neg\K\Kh p\to \K\neg\Kh p &{(1),\NECK}\\
	(3) & \neg\K\Kh p\to \K\neg\K\Kh p & \AxEucK\\
	(4) & \neg\K\Kh p\to \K\neg\Kh p & (2),(3),\MP\\
	(5) & \neg\Kh p\to \neg \K\Kh p & \AxTrK\\
	(6) & \neg\Kh p\to \K\neg\Kh p & (4),(5),\MP\\
\end{array}$
\end{proof}
Note that we do not have the K axiom for $\Kh$. Instead, we have the monotonicity rule $\EQREPKh$. In fact, the logic is not normal, as desired, e.g., $\Kh p\land\Kh q\to \Kh(p\land q)$ is not valid: the existence of two different strategies for different goals does not imply the existence of a unified strategy to realize both goals. 
\subsection{Validity of $\AxKhKh$}
$\AxKhKh$ is about the ``sequential'' compositionality of strategies. Suppose on some pointed model there is a strategy $\sigma$ to guarantee that we end up with the states where on each $s$ of them we have some other strategy $\sigma_s$ to make sure $p$ ($\Kh\Kh p$). Since the strategies are uniform, we only need to consider some $\sigma_{[s]}$ for each $[s]$. Now to validate $\AxKhKh$, we need to design a unified strategy to compose $\sigma$ and those $\sigma_{[s]}$ into one strategy to still guarantee $p$ ($\Kh p$). The general idea is actually simple: first ordering those leafnodes $[s]$ (using Axiom of Choice); then by transfinite induction  adjust $\sigma_{[s]}$ one by one to make sure these strategies can fit together as a unified strategy $\theta$; finally, merge the relevant part of $\sigma$ with $\theta$ into the desired strategy. We make this idea precise below. 
First we need an observation: 
\begin{proposition}\label{prop:subfuction}
	Given strategies $\tau$ and $\sigma$ with $\tau\subseteq \sigma$, if $[s]\in\Dom(\tau)$ and $\Dom(\sigma)\cap\CELeafN(\tau,s) =\emptyset $, then a sequence is $\sigma$'s complete execution from $[s]$ if and only if it is $\tau$'s complete execution from $[s]$.
\end{proposition}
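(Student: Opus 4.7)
The plan is to exploit the fact that $\tau \subseteq \sigma$ as partial functions, so wherever $\tau$ is defined, $\tau$ and $\sigma$ prescribe the same action. Consequently any $\tau$-execution is automatically a $\sigma$-execution in the sense of making the same transitions; the only concerns are (i) whether a complete $\tau$-execution can fail to be complete for $\sigma$ (i.e.\ $\sigma$ extends it further), and (ii) whether a complete $\sigma$-execution from $[s]$ can wander outside $\Dom(\tau)$ at some intermediate node. The hypothesis $\Dom(\sigma) \cap \CELeafN(\tau,s) = \emptyset$ is precisely what rules out (i), and together with $[s] \in \Dom(\tau)$ it will also rule out (ii) via an induction on execution length.

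For the easy direction, I will take a complete $\tau$-execution $\delta$ from $[s]$, observe that it is a $\sigma$-execution because $\tau \subseteq \sigma$, and then split on whether $\delta$ is infinite (in which case completeness for $\sigma$ is immediate) or finite with leaf node $[s_n] \notin \Dom(\tau)$. In the latter case, $[s_n] \in \CELeafN(\tau,s)$, so by hypothesis $[s_n] \notin \Dom(\sigma)$, giving completeness for $\sigma$.

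For the other direction, I will take a complete $\sigma$-execution $\delta = [s_0][s_1]\cdots$ with $[s_0] = [s]$ and show by induction on $i$ that each prefix $[s_0]\cdots[s_i]$ is also a $\tau$-execution, i.e.\ each $[s_j]$ with $j<i$ lies in $\Dom(\tau)$ and has $\tau([s_j]) = \sigma([s_j])$. The base case uses $[s] \in \Dom(\tau)$. In the inductive step, suppose $[s_0]\cdots[s_i]$ is a valid $\tau$-prefix and there is a next node $[s_{i+1}]$ in $\delta$; then $[s_i] \in \Dom(\sigma)$, and if $[s_i]$ failed to lie in $\Dom(\tau)$ the prefix $[s_0]\cdots[s_i]$ would be a complete $\tau$-execution from $[s]$ with $[s_i] \in \CELeafN(\tau,s) \cap \Dom(\sigma)$, contradicting the hypothesis. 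So the induction goes through, and $\delta$ is a $\tau$-execution. Completeness of $\delta$ for $\tau$ follows: if $\delta$ is infinite, it is complete by definition; if finite with leaf $[s_n]$, then $[s_n] \notin \Dom(\sigma) \supseteq \Dom(\tau)$.

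The main obstacle is simply bookkeeping the case split between finite and infinite executions and making sure the hypothesis $\Dom(\sigma) \cap \CELeafN(\tau,s) = \emptyset$ is invoked at exactly the right moment in the induction, namely to prevent a $\sigma$-execution from stepping past a $\tau$-leaf node; no genuinely delicate argument is required beyond this.
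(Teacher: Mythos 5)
Your proposal is correct and follows essentially the same route as the paper's proof: the forward direction is handled by a least-counterexample/induction argument showing a complete $\sigma$-execution cannot leave $\Dom(\tau)$ at an inner node without producing a $\tau$-leaf inside $\Dom(\sigma)$, and the backward direction uses $\tau\subseteq\sigma$ plus the disjointness hypothesis on the leaf node. The paper phrases the forward direction via a minimal offending node rather than induction on prefixes, but this is the same argument in different clothing.
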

\begin{proof}
	Left to Right: Let $[s_0]\cdots [s_n]\cdots$ be a $\sigma$'s complete execution from $[s]$. We will show it is also a $\tau$'s complete execution from $[s]$. Firstly, we show it is a possible execution give $\tau$ from $[s]$. If it is not, there exists $[s_i]$ such that $[s_i]$ is not the \LeafNode\ of this execution and that $[s_i]\not\in\Dom(\tau)$. Let $[s_j]$ be the minimal equivalence class in the sequence with such properties. It follows that $[s_j]\in\CELeafN(\tau,s)$ and $[s_j]\in\Dom(\sigma)$. These are contradictory with $\Dom(\sigma)\cap\CELeafN(\tau,s) =\emptyset $. 
	
	Next we will show that $[s_0]\cdots [s_n]\cdots$ be a $\tau$'s complete execution from $[s]$. It is obvious if the sequence is infinite. If it is finite, let the \LeafNode\ is $[s_m]$. It follows that $[s_m]\not\in\Dom(\sigma)$. Since $\tau\subseteq \sigma$, it follows $[s_m]\not\in\Dom(\tau)$. Therefore, the execution is complete given $\tau$.
	
	Right to Left: Let $[s_0]\cdots [s_n]\cdots$ be a $\tau$'s complete execution from $[s]$, we will show it is also a $\sigma$'s complete execution from $[s]$. Since $\tau\subseteq \sigma$, it is also a possible execution given $\sigma$. If the execution is infinite, it is obvious. If it is finite, let the \LeafNode\ is $[s_m]$. It follows that $[s_m]\in\CELeafN(\tau,s)$. Since $\Dom(\sigma)\cap\CELeafN(\tau,s)=\emptyset$, it follows that $[s_m]\not\in\Dom(\sigma)$. Therefore, the execution is also complete give $\sigma$.
\end{proof}

%

\begin{proposition}
	$\vDash\Kh\Kh\phi\to\Kh\phi$.
\end{proposition}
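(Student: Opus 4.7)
The plan is to compose the ``outer'' strategy witnessing $\Kh\Kh\phi$ with a family of ``inner'' strategies witnessing $\Kh\phi$ at each of its leaves, producing a single uniform executable strategy for $\Kh\phi$. Fix $\M,s \vDash \Kh\Kh\phi$ and extract a strategy $\sigma$ whose complete executions from $[s]$ are all finite and whose leaves $[t] \in \CELeafN(\sigma,s)$ all satisfy $[t] \subseteq \llrr{\Kh\phi}$. For each such $[t]$, apply the Axiom of Choice to pick a strategy $\sigma_{[t]}$ witnessing $\Kh\phi$ at $[t]$: all complete executions of $\sigma_{[t]}$ from $[t]$ are finite and terminate at leaves contained in $\llrr{\phi}$.

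Next, I would well-order $\CELeafN(\sigma,s)$ as $\{[t_\alpha] : \alpha < \lambda\}$ and construct by transfinite induction an increasing chain of partial strategies $\theta_\alpha$ starting from $\theta_0 = \sigma\!\!\upharpoonright_{\CEInnerN(\sigma,s)}$. At a successor stage, let $\tau_\alpha$ be the restriction of $\sigma_{[t_\alpha]}$ to those equivalence classes reachable from $[t_\alpha]$ along a $\sigma_{[t_\alpha]}$-path that does not enter $\Dom(\theta_\alpha)$ strictly before arrival, and put $\theta_{\alpha+1} = \theta_\alpha \cup \tau_\alpha$; at limit stages, take unions. By design $\theta_\alpha$ and $\tau_\alpha$ have disjoint domains, so $\theta := \theta_\lambda$ is a well-defined partial function from $\EQ{W}$ to $\ACT$ whose values remain executable on the entire equivalence classes, i.e., a uniform executable strategy in the sense of the Definition of Strategies.

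Finally, I would verify that $\theta$ witnesses $\Kh\phi$ at $s$: a complete execution of $\theta$ from $[s]$ first follows $\sigma$ on $\CEInnerN(\sigma,s)$, reaching some leaf $[t_\alpha]$ in finitely many steps; from $[t_\alpha]$ it follows $\tau_\alpha$ and either terminates at a $\phi$-leaf of $\sigma_{[t_\alpha]}$ or enters $\Dom(\theta_\alpha)$, where the inductive hypothesis guarantees termination at a $\phi$-leaf. Proposition~\ref{prop:subfuction} is the key technical ingredient here, certifying that the restriction from $\sigma_{[t_\alpha]}$ to $\tau_\alpha$ does not alter the complete executions that matter once the composite $\theta_{\alpha+1}$ takes over. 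The main obstacle I anticipate is precisely this transfinite bookkeeping: distinct $\sigma_{[t_\alpha]}$'s may prescribe different actions on the same equivalence class, and a careless restriction could either introduce an infinite execution (by looping through newly-added classes) or let an execution overshoot a would-be $\phi$-leaf. Stopping each $\tau_\alpha$ exactly at the frontier of $\Dom(\theta_\alpha)$---and invoking Proposition~\ref{prop:subfuction} to transfer termination and goal-correctness across the hand-off---is what makes the composition go through.
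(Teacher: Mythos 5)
Your overall blueprint --- well-order the leaves of the outer strategy $\sigma$, adjust the inner strategies $\sigma_{[t_\alpha]}$ by transfinite induction so their domains do not clash, and merge --- is exactly the paper's. But there is a genuine gap in the order in which you merge: you seed the construction with $\theta_0=\sigma\!\upharpoonright_{\CEInnerN(\sigma,s)}$ and then cut each $\tau_\alpha$ off at the frontier of $\Dom(\theta_\alpha)\supseteq\Dom(\theta_0)$. This gives the \emph{outer} strategy priority on every class in $\CEInnerN(\sigma,s)$, so whenever an inner strategy's path re-enters $\CEInnerN(\sigma,s)$, control is handed back to $\sigma$ --- which only promises to reach a $\Kh\phi$-leaf, not a $\phi$-leaf, and which can send the execution straight back to a leaf of $\sigma$, creating a cycle. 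Concretely: take singleton classes $[s],[t],[v]$ with $s\rel{a}t$, $t\rel{b}s$, $s\rel{b}v$, where only $v$ satisfies $\phi$. Then $\sigma=\{[s]\mapsto a\}$ witnesses $\Kh\Kh\phi$ at $s$ (its leaf $[t]$ satisfies $\Kh\phi$ via $\sigma_{[t]}=\{[t]\mapsto b,[s]\mapsto b\}$). Your construction yields $\theta_0=\{[s]\mapsto a\}$ and, since $[s]\in\Dom(\theta_0)$, $\tau_0=\{[t]\mapsto b\}$, so $\theta=\{[s]\mapsto a,[t]\mapsto b\}$ has the infinite complete execution $[s][t][s][t]\cdots$ and witnesses nothing. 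Relatedly, Proposition~\ref{prop:subfuction} cannot play the role you assign it: it requires $\Dom(\sigma)\cap\CELeafN(\tau,s)=\emptyset$, whereas the leaves at which your $\tau_\alpha$ is truncated lie \emph{inside} $\Dom(\theta)$ by construction, so executions of $\theta$ from $[t_\alpha]$ genuinely differ from those of $\tau_\alpha$.

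The paper avoids this by inverting the priority. It first builds $\tau_\gamma=\bigcup_{i<\gamma}\tau_i$ entirely from the (adjusted) inner strategies, with $\tau_0=\sigma_0\!\upharpoonright_{\CEInnerN(\sigma_0,s_0)}$ and each later $\tau_i$ restricted away from both $\Dom(f_i)$ \emph{and} the leaf-frontier $\{[v]\in\CELeafN(f_i,t)\mid[t]\in\Dom(f_i)\}$ of what was built so far (your restriction omits this second exclusion, which is needed so that executions already guaranteed to stop at a $\phi$-class are not extended past it). Only at the very end is the outer strategy patched in, as $\sigma\!\upharpoonright_C$ with $C$ disjoint from $\Dom(\tau_\gamma)$ and from its leaf-frontier. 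Thus wherever an inner strategy is defined it wins over $\sigma$, every hand-off goes \emph{from} $\sigma$ \emph{into} the inner part and never back, and the cycle above cannot arise (in the example, the paper's $\tau$ maps $[s]\mapsto b$). You would need to restructure your induction along these lines for the termination and goal-correctness claims in your final paragraph to hold.
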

\begin{proof}
	Supposing $\M,s\vDash\Kh\Kh\phi$, we will show that $\M,s\vDash\Kh\phi$. It follows by the semantics that there exists a strategy $\sigma$ such that all $\sigma$'s complete executions from $[s]$ are finite and ${[t]}\subseteq \llrr{\Kh\phi}$ for all $[t]\in\CELeafN(\sigma,s)\ (\ast)$. If $[s]\not\in\Dom(\sigma)$, then $\CELeafN(\sigma,s)=\{[s] \}$, and then it is trivial that $\M,s\vDash\Kh\phi$. Next we focus on the case of $[s]\in\Dom(\sigma)$. 
	
	According to well-ordering theorem (equivalent to Axiom of Choice), we assume $\CELeafN(\sigma,s)=\{S_i\mid i< \gamma \}$ where $\gamma$ is an ordinal number and $\gamma\geq 1$. Let $s_i$ be an element in $S_i$; then $[s_i]=S_i$. Since  $\M,s_i\vDash\Kh\phi$ for each $i<\gamma$, it follows that for each $[s_i]$ there exists a strategy $\sigma_i$ such that all $\sigma_i$'s complete executions from $[s_i]$ are finite and $[v]\subseteq \llrr{\phi}$ for all $[v]\in\CELeafN(\sigma_i,s_i)\ (\blacktriangleleft)$. Next, in order to show $\M,s\vDash\Kh\phi$, we need to define a strategy $\tau$. The definition consists of the following steps. 

\medskip
	\noindent\textbf{Step I}. By induction on $i$, we will define a set of strategies $\tau_i$ where $0\leq i< \gamma$. Let $f_i=\bigcup_{\beta< i}\tau_\beta$ and $D_i=\CEInnerN(\sigma_{i},s_{i})\setminus(\Dom(f_i)\cup \{[v]\in\CELeafN(f_i,t)\mid [t]\in\Dom(f_i) \})$ we define:
	\begin{itemize}
		\item $\tau_0=\sigma_0|_{\CEInnerN(\sigma_0,s_0)}$; 
         \item $\tau_i=f_i\cup (\sigma_i|_{D_i})$ for $i>0$.
	\end{itemize}
\begin{claim}\label{claim:tau_i}
We have the following results:
\begin{itemize}
\item[1.]For each $0\leq i< \gamma$, $\tau_j\subseteq\tau_i$ if $j< i$;
\item[2.]For each $0\leq i< \gamma$, $\tau_i$ is a partial function;
\item[3.]For each $0\leq i< \gamma$, $\Dom(\tau_i)\cap \CELeafN(\tau_j,t)=\emptyset $ where $t\in\Dom(\tau_j)$ if $j< i$;
\item[4.]For each $0\leq i< \gamma$, if $\delta=[t_0]\cdots$ is a $\tau_i$'s complete execution from $[t]\in\Dom(\tau_i)$ then $|\delta|=n$ for some $n\in\mathbb{N}$ and $[t_n]\subseteq\llrr{\phi}$;
\item[5.]For each $0\leq i< \gamma$, $[s_i]\in\Dom(\tau_i)$ or $[s_i]\subseteq\llrr{\phi}$.
\end{itemize}
\end{claim}    
\begin{claimproof}
\begin{itemize}
\item[1.] It is obvious.
\item[2.] We prove it by induction on $i$. For the case of $i=0$, it is obvious. For the case of $i=\alpha>0$, it follows by the IH that $\tau_\beta$ is a partial function for each $\beta<\alpha$. Furthermore, it follows by 1. that $\tau_{\beta_1}\subseteq\tau_{\beta_2}$ for all $\beta_1<\beta_2<\alpha$. Thus, we have $f_\alpha=\bigcup_{\beta<\alpha}\tau_\beta$ is a partial function. Since $\sigma_\alpha$ is a partial function, in order to show $\tau_\alpha$ is a partial function, we only need to show that $\Dom(f_\alpha)\cap D_\alpha=\emptyset $. Since $D_\alpha=\CEInnerN(\sigma_{\alpha},s_{\alpha})\setminus\Dom(f_\alpha)\setminus\{[v]\in\CELeafN(f_\alpha,t)\mid t\in\Dom(f_\alpha) \}$, it is obvious that $\Dom(f_\alpha)\cap D_\alpha=\emptyset $.
\item[3.] We prove it by induction on $i$. It is obvious for the case of $i=0$. For the case of $i=\alpha>0$, given $j<\alpha$ and $t\in\Dom(\tau_j)$, we need to show that $\Dom(\tau_\alpha)\cap\CELeafN(\tau_j,t)=\emptyset$. Supposing $[v]\in\CELeafN(\tau_j,t)$, we will show that $[v]\not\in\Dom(\tau_\alpha)$, namely $[v]\not\in \Dom(f_\alpha)\cup D_\alpha$. Since $j<\alpha$ and $f_\alpha=\bigcup_{\beta<\alpha}\tau_\alpha$, it follows $t\in\Dom(f_\alpha)$. Moreover, due to $D_\alpha=\CEInnerN(\sigma_{\alpha},s_{\alpha})\setminus\Dom(f_\alpha)\setminus\{[v]\in\CELeafN(f_\alpha,t)\mid t\in\Dom(f_\alpha) \}$, it follows $[v]\not\in D_\alpha$. 

Next, we only need to show $[v]\not\in \Dom(f_\alpha)$. Assuming $[v]\in\Dom(f_\alpha)$, it follows that $[v]\in\Dom(\tau_\beta)$ for some $\beta<\alpha$. There are two cases: $j<\beta$ or $j\geq\beta$. If $j<\beta$, it follows by the IH that $\Dom(\tau_\beta)\cap\CELeafN(\tau_j,t)=\emptyset$. Contradiction. If $j\geq \beta$, it follows by 1. that $\tau_\beta\subseteq\tau_j$. Due to $[v]\in\Dom(\tau_\beta)$, it follows $[v]\in\Dom(\tau_j)$. It is contradictory with $[v]\in\CELeafN(\tau_j,t)$. Thus, we have $[v]\not\in\Dom(f_\alpha)$.

\item[4.] We prove it by induction on $i$. For the case of $i=0$, due to $\Dom(\tau_0)=\CEInnerN(\sigma_0,s_0)$, it follows that there is a $\sigma_0$'s possible execution $[s_0]\cdots [s_m]$ such that $m\in\mathbb{N}$ and $[s_m]=[t]$. Let $\mu=[s_0]\cdots[s_{m-1}]\circ\delta$. (If $m=0$ then $\mu=\delta$). Since $\delta$ is a $\tau_0$'s complete execution from $[t]$, it follows that $\mu$ is a $\sigma_0$'s complete execution from $[s_0]$. It follows by ($\blacktriangleleft$) that $\mu$ is finite. Thus, $\delta=[t_0]\cdots[t_n]$ for some $n\in\mathbb{N}$. Since $[t_n]\in\CELeafN(\sigma_0,s_0)$, it follows by $(\blacktriangleleft)$ that $[t_n]\subseteq\llrr{\phi}$.

For the case of $i=\alpha>0$, there are two situations: $[t]\in\Dom(f_\alpha)$ or $[t]\in D_\alpha$.
		 If $[t]\in\Dom(f_\alpha)$, it follows that $[t]\in\Dom(\tau_\beta)$ for some $\beta<\alpha$. By 3, we have $\Dom(\tau_\alpha)\cap\CELeafN(\tau_\beta,t)=\emptyset$. Since $\delta$ is a $\tau_\alpha$'s complete execution, it follows by Proposition \ref{prop:subfuction} that $\delta$ is also a $\tau_\beta$'s complete execution from $[t]$. It follows by the IH that $|\delta|=n$ for some $n\in\mathbb{N}$ and $[t_n]\subseteq\llrr{\phi}$.
         		 
		 If $[t]\in D_\alpha$, 
         there are two cases: there exist $k<|\delta|$ and $\beta<\alpha$ s.t. $[t_k]\in \Dom(\tau_\beta)$, or there do not exist such $k$ and $\beta$. (Please note that $|\delta|>1$ due to the fact that $\delta=[t_0]\cdots$ is $\tau_\alpha$'s complete execution from $[t]\in\Dom(\tau_\alpha)$).
		 \begin{itemize}
		 	\item $[t_k]\in\Dom(\tau_\beta)$ for some $k<|\delta|$ and some $\beta<\alpha$: It follows that $\mu=[t_k]\cdots$ is a $\tau_{\alpha}$'s complete execution from $[t_k]$. By 3. and Proposition \ref{prop:subfuction}, $\mu$ is a $\tau_\beta$'s complete execution from $[t_k]$. By IH, $\mu=[t_k]\cdots[t_{k+n}]$ for some $n\in\mathbb{N}$ and $[t_{k+n}]\subseteq\llrr{\phi} $. Therefore, $|\delta|=k+n$.
		 	\item If there do not exist $k<|\delta|$ and $\beta<\alpha$ s.t. $[t_k]\in\Dom(\tau_\beta)$, it follows that $\delta=[t_0]\cdots$ is a $\sigma_{\alpha}$'s possible execution from $[t]$. 
            Since $[t]\in D_\alpha\subseteq\CEInnerN(\sigma_{\alpha},s_{\alpha})$, then there is a $\sigma_\alpha$'s possible execution $[s_0]\cdots[s_m]$ s.t. $m\in\mathbb{N}$, $[s_0]=[s_\alpha]$ and $[s_m]=[t] $. Let $\mu=[s_0]\cdots[s_{m-1}]\circ\delta$. (If $m=0$ then $\mu=\delta$). It follows that $\mu$ is $\sigma_\alpha$'s possible execution from $s_\alpha$. By $(\blacktriangleleft)$, all $\sigma_\alpha$'s complete executions from $s_\alpha$ are finite. Thus, $\mu$ is finite. Therefore, $\delta=[t_0]\cdots[t_n]$ for some $n\in\mathbb{N}$.
            
            We continue to show that $[t_n]\subseteq\llrr{\phi}$. Since $\delta=[t_0]\cdots[t_n]$ is a $\tau_\alpha$'s complete execution from $t$ and it is also a $\sigma_\alpha$'s possible execution from $t$, there are two cases: $[t_n]\in \CELeafN(f_\alpha,t')$ for some $t'\in \Dom(f_\alpha)$, or $\delta$ is a $\sigma_\alpha$'s complete execution from $t$. If $[t_n]\in \CELeafN(f_\alpha,t')$ for some $t'\in \Dom(f_\alpha)$, then there exists $\beta<\alpha$ s.t. $[t]\in\CELeafN(\tau_\beta,t')$ and $[t']\in\Dom(\beta)$. By IH, $[t_n]\subseteq \llrr{\phi}$. If $\delta$ is a $\sigma_\alpha$'s complete execution from $t$, it follows that $\mu$ is a $\sigma_\alpha$'s complete execution from $[s_\alpha]$. Then by $(\blacktriangleleft)$, we have $[t_n]\subseteq\llrr{\phi}$.
            
 %
		 \end{itemize}
 
 \item[5.] If $[s_i]\not\in\Dom(\sigma_i)$, it follows by $(\blacktriangleleft)$ that $[s_i]\subseteq\llrr{\phi}$. Otherwise, there are two cases: $i=0$ or $i=\alpha>0$. If $i=0$, it follows by $[s_0]\in\Dom(\sigma_0)$ that $[s_0]\in\CEInnerN(\sigma_0,s_0)$. Thus, $[s_0]\in\Dom(\tau_0)$.
 
 If $i=\alpha>0$ and $[s_\alpha]\in \Dom(\sigma_\alpha)$, we will show that if $[s_\alpha]\not\in\Dom(\tau_\alpha)$ then $[s_\alpha]\subseteq\llrr{\phi}$. Firstly, we have that $[s_i]\in\CEInnerN(\sigma_\alpha,s_\alpha)$. Since $[s_\alpha]\not\in\Dom(\tau_\alpha)$, it follows that $[s_\alpha]\in\CELeafN(f_\alpha,t)$ for some $[t]\in\Dom(f_\alpha)$. It follows that there exists $\beta<\alpha$ such that $[s_\alpha]\in\CELeafN(\tau_\beta,t)$ and $t\in\Dom(\tau_\beta)$. It follows by 4. that $[s_i]\subseteq\llrr{\phi}$.
\end{itemize}
\end{claimproof}

\medskip
\noindent\textbf{Step II}. We define $\tau_\gamma=\bigcup_{i<\gamma}\tau_i$. It follows by 1.\ and 2.\ of Claim \ref{claim:tau_i} that $\tau_\gamma$ is indeed a partial function. Then we prove the following claim.
	\begin{claim}\label{claim:tau'}
If $\delta=[t_0]\cdots$ is a $\tau_\gamma$'s complete execution from $[t]\in\Dom(\tau_\gamma)$ then $|\delta|=n$ for some $n\in\mathbb{N}$ and $[t_n]\subseteq\llrr{\phi}$
	\end{claim}
    \begin{claimproof}
Since $[t]\in\Dom(\tau_\gamma)$, it follows that $[t]\in\Dom(\tau_i)$ for some $i<\gamma$. It follows by 5. of Claim \ref{claim:tau_i} that all $\tau_i$'s complete executions from $[t]$ are finite. Thus, there exists $\mu\sqsubseteq\delta$ such that $|\mu|=n$ for some $n\in\mathbb{N}$ and $\mu$ is $\tau_i$'s complete execution from $[t]$. It follows by 5. of Claim \ref{claim:tau_i} that $[t_n]\subseteq\llrr{\phi}$.

Next, we only need to show $\delta=\mu$. If not, then $\delta=[t_0]\cdots[t_n][t_{n+1}]\cdots$. We then have that there exists $j<\gamma$ such that $\{t_k\mid 0\leq k\leq n\}\subseteq \Dom(\tau_j)$. It cannot be that $j\leq i $. Otherwise, $\mu$ is not $\tau_i$'s complete execution since $\tau_j\subseteq\tau_i$ by 1. of Claim \ref{claim:tau_i}. Thus, we have $j>i$. Since we also have that $[t_n]\in\Dom(\tau_j)$, $[t_n]\in\CELeafN(\tau_i,t)$ and $t\in\Dom(\tau_i)$, this is contradictory with 3. of Claim \ref{claim:tau_i}. Therefore, we have $\delta=\mu$.
\end{claimproof}

\medskip
\noindent\textbf{Step III}. We define $\tau$ as $\tau=\tau_\gamma\cup(\sigma|_C)$ where $C=\CEInnerN(\sigma,s)\setminus(\Dom(\tau_\gamma)\cup \{[v]\in \CELeafN(\tau',t)\mid [t]\in\Dom(\tau_\gamma)\})$ and $\sigma$ is the strategy mentioned at $(\ast)$.
Since both $\tau_\gamma $ and $\sigma|_C$ are partial functions, $\tau$ is also a partial function.
We then prove the following claim.
	\begin{claim}\label{claim:tau}
If $\delta=[t_0]\cdots$ is a $\tau$'s complete execution from $[t]\in\Dom(\tau)$ then $|\delta|=n$ for some $n\in\mathbb{N}$ and $[t_n]\subseteq\llrr{\phi}$.
	\end{claim}
	\begin{claimproof}
		Since $\Dom(\tau)=\Dom(\tau_\gamma)\cup C$, there are two cases: $[t]\in\Dom(\tau_\gamma)$ or $[t]\in C$. 
		
		If $[t]\in\Dom(\tau_\gamma)$, it follows that $\CELeafN(\tau_\gamma,t)\cap C=\emptyset$. Moreover, we have  $\CELeafN(\tau_\gamma,t)\cap\Dom(\tau_\gamma)=\emptyset$. Thus, we have $\CELeafN(\tau_\gamma,t)\cap\Dom(\tau)=\emptyset$. It follows by Proposition \ref{prop:subfuction} that $\delta$ is a $\tau_\gamma$'s complete execution from from $[t]$. It follows by Claim \ref{claim:tau'} $|\delta|=n$ for some $n\in\mathbb{N}$ and $[t_n]\subseteq\llrr{\phi}$
		
		If $[t]\in C$, there are two cases: there exists $k<|\delta|$ such that $[t_k]\in \Dom(\tau_\gamma)$, or there does not exists such $k$. (Please note that $|\delta|>1$ due to the fact that $\delta=[t_0]\cdots$ is $\tau$'s complete execution from $[t]\in\Dom(\tau)$).
		\begin{itemize}
			\item $[t_k]\in\Dom(\tau_\gamma)$ for some $k<|\delta|$: It follows that $\mu=[t_k]\cdots$ is a $\tau$'s complete execution from $[t_k]$. Since $\Dom(\tau)\cap\CELeafN(\tau_\gamma,t_k)=\emptyset$. It follows by Proposition \ref{prop:subfuction} that $\mu$ is a $\tau_\gamma$'s complete execution from $[t_k]$. It follows by Claim \ref{claim:tau'} that $\mu=[t_k]\cdots[t_{k+n}]$ for some $n\in\mathbb{N}$ and $[t_{k+n}]\subseteq\llrr{\phi} $. Therefore, $|\delta|=k+n$.
            		 	\item If there does not exist $k<|\delta|$ s.t. $[t_k]\in\Dom(\tau_\gamma)$, then $\delta=[t_0]\cdots$ is a $\sigma$'s possible execution from $[t]$. 
            Since $[t]\in C\subseteq\CEInnerN(\sigma,s)$, then there is a $\sigma$'s possible execution $[s_0]\cdots[s_m]$ s.t. $m\in\mathbb{N}$, $[s_0]=[s]$ and $[s_m]=[t] $. Let $\mu=[s_0]\cdots[s_{m-1}]\circ\delta$. (If $m=0$ then $\mu=\delta$). It follows that $\mu$ is $\sigma$'s possible execution from $s$. By $(\ast)$, all $\sigma$'s complete executions from $s$ are finite. Thus, $\mu$ is finite. Therefore, $\delta=[t_0]\cdots[t_n]$ for some $n\in\mathbb{N}$.
            
            We continue to show that $[t_n]\subseteq\llrr{\phi}$. Since $\delta=[t_0]\cdots[t_n]$ is a $\tau$'s complete execution from $t$ and it is also a $\sigma$'s possible execution from $t$, there are two cases: $[t_n]\in \CELeafN(\tau_\gamma,t')$ for some $t'\in \Dom(\tau_\gamma)$, or $\delta$ is a $\sigma$'s complete execution from $t$. If $[t_n]\in \CELeafN(\tau_\gamma,t')$ for some $[t']\in \Dom(\tau_\gamma)$, it follows  by Claim \ref{claim:tau'} that $[t_n]\subseteq \llrr{\phi}$. If $\delta$ is a $\sigma$'s complete execution from $t$, it follows that $\mu$ is a $\sigma$'s complete execution from $[s]$. It follows that $[t_n]=S_i$ for some $0\leq i<\gamma$. Since $\delta=[t_0]\cdots[t_n]$ is $\tau$'s complete execution from $[t]\in\Dom(\tau_\gamma)$, it follows $[t_n]\not\in\Dom(\tau_\gamma)$. We then have $[t_n]\not\in\Dom(\tau_i)$, namely $S_i\not\in\tau_i$. It follows by 5. of Claim \ref{claim:tau_i} that $S_i\subseteq\llrr{\phi}$, namely $[t_n]\subseteq\llrr{\phi}$.
\end{itemize} \end{claimproof}

Next, we continue to show that $\calM,s\vDash\Kh\phi$ with the assumption that $[s]\in\Dom(\sigma)$.	
	Since $[s]\in\Dom(\sigma)$, we have $[s]\in\CEInnerN(\sigma,s)$. There are two cases: $[s]\in\Dom(\tau)$ or not. If $[s]\in\Dom(\tau)$, it follows by claim \ref{claim:tau} that $\M,s\vDash\Kh\phi$. If $[s]\not\in\Dom(\tau)$, due to $[s]\in\CEInnerN(\sigma,s)$, it follows that $[s]\in\CELeafN(\tau_\gamma,t)$ for some $[t]\in\Dom(\tau_\gamma)$. It follows by Claim \ref{claim:tau'} that $[s]\subseteq\llrr{\phi}$. It follows that $\calM,s\vDash \K\phi$. It is obvious that $\calM,s\vDash\Kh\phi$.
\end{proof}

\begin{theorem}[Soundness]\label{them:soundness}
	If $\vdash\phi$ then $\vDash\phi$.
\end{theorem}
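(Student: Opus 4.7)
The plan is a standard induction on the length of a derivation in $\SKh$: it suffices to verify that every axiom is valid and every inference rule preserves validity. The $\mathbb{S}5$ axioms ($\DISTK$, $\AxTrK$, $\AxTransK$, $\AxEucK$) together with $\TAUT$ follow at once from $\sim$ being an equivalence relation, and the rules $\MP$, $\NECK$ and $\SUB$ are handled by the usual boilerplate arguments. The validity of the central axiom $\AxKhKh$ has already been established in the preceding Proposition, so the remaining work is a case analysis of the specifically new axioms $\AxKtoKh$, $\AxKhtoKhK$, $\AxKhtoKKh$ and $\AxKhbot$, and of the monotonicity rule $\EQREPKh$.

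For $\AxKtoKh$, if $\M,s\vDash \K p$ then $[s]\subseteq\llrr{p}$, so the empty strategy witnesses $\Kh p$: its only complete execution from $[s]$ is the one-node sequence $[s]$, which is trivially finite, and $[s]$ is the unique leaf-node, contained in $\llrr{p}$. For $\AxKhtoKhK$, any witnessing strategy $\sigma$ for $\Kh p$ at $s$ is also a witness for $\Kh \K p$: if $[t]\in\CELeafN(\sigma,s)$ then $[t]\subseteq\llrr{p}$, and because $[t]$ is an equivalence class, $\K p$ holds at every point of $[t]$, i.e., $[t]\subseteq\llrr{\K p}$. For $\AxKhtoKKh$, the key observation is that strategies are functions on equivalence classes: if $\sigma$ witnesses $\Kh p$ at $s$ and $s'\sim s$, then $[s']=[s]$ and the same $\sigma$ witnesses $\Kh p$ at $s'$, so $\M,s\vDash\K\Kh p$. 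For $\AxKhbot$, suppose toward contradiction that $\M,s\vDash\Kh\bot$ via some $\sigma$. Using the executability requirement on $\sigma$, one can always extend any partial execution as long as the last node is in $\Dom(\sigma)$, so there is at least one complete execution $\delta$ from $[s]$; by the finiteness clause $\delta$ has a leaf-node $[t]$, which by the first clause satisfies $[t]\subseteq\llrr{\bot}=\emptyset$, contradicting $[t]\ne\emptyset$.

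Finally, $\EQREPKh$ is preserved because the semantic clause for $\Kh$ is monotone in the target formula: if $\phi\to\psi$ is valid then $\llrr{\phi}\subseteq\llrr{\psi}$, so any strategy whose leaf-nodes lie within $\llrr{\phi}$ also has its leaf-nodes within $\llrr{\psi}$. Piecing everything together yields the theorem. The only conceptually delicate step is $\AxKhKh$, whose soundness rests on the transfinite construction carried out in the previous Proposition; every other case reduces to a direct unfolding of the semantics of $\Kh$ and the fact that $\sim$ is an equivalence relation making strategies well-defined on equivalence classes.
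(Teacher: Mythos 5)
Your proposal is correct and follows essentially the same route as the paper, which likewise treats soundness as a routine induction on derivations, devotes its effort only to the validity of $\AxKhKh$ (established in the preceding proposition, which you correctly invoke), and leaves the remaining axioms and rules as direct unfoldings of the semantics. Your case analyses for $\AxKtoKh$, $\AxKhtoKhK$, $\AxKhtoKKh$, $\AxKhbot$ and $\EQREPKh$ are all accurate fillings-in of what the paper leaves implicit.
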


\section{Completeness and Decidability}\label{sec.proof}
Let $\SFC$ be a subformula-closed set of formulas. It is obvious that $\SFC$ is countable since the whole language itself is countable. Given a set of formulas $\Delta$, let:
$\Delta|_{\K}=\{\K\phi\mid\K\phi\in\Delta \}$, 
$\Delta|_{\neg\K}=\{\neg\K\phi\mid\neg\K\phi\in\Delta \}$, 
$\Delta|_{\Kh}=\{\Kh\phi\mid\Kh\phi\in\Delta \}$, 
$\Delta|_{\neg\Kh}=\{\neg\Kh\phi\mid\neg\Kh\phi\in\Delta \}$.
Below we define the closure of $\Phi$, and use it to build a canonical model w.r.t.\ $\Phi$. We will show that when $\Phi$ is finite then we can build a finite model.
\begin{definition}\label{def.closure}
	$\cl{\SFC}$ is defined as:
    \[\cl{\SFC}=\SFC\cup\{\K\phi\mid\phi\in\SFC\}.\]
\end{definition}
\begin{definition}[Atom]
	Let $\cl{\SFC}=\{\psi_i\mid i\in\mathbb{N}\}$. The formula set $\Delta=\{Y_i\mid i\in\mathbb{N}\}$ is an atom of $\cl{\SFC}$ if 
	\begin{itemize}
		\item $Y_i=\psi_i$ or $Y_i=\neg\psi_i$ for all $\psi_i\in \cl{\SFC}$;
		\item $\Delta$ is consistent.
	\end{itemize}
\end{definition}
Note that if $\Phi$ is the whole language then an atom is simply a maximal consistent set. 
By a standard inductive construction, we can obtain the Lindenbaum-like result in our setting (which is useful to show the existence lemma for $\K$):
\begin{proposition}\label{pro.lindenbaumForAtom}
	Let $\Delta$ be an atom of $\cl{\SFC}$, $\Gamma\subseteq\Delta$ and  $\phi\in\cl{\SFC}$. If $\Gamma\cup\{\pm\phi\}$ is consistent then there is an atom $\Delta'$ of $\cl{\SFC}$ such that $(\Gamma\cup\{\pm{\phi}\})\subseteq \Delta'$, where $\pm\phi=\phi$ or $\pm\phi=\neg\phi$.
\end{proposition}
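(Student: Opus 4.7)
The plan is to perform a standard Lindenbaum-style construction relative to the enumeration of $\cl{\SFC}$ fixed in the definition of atom. Set $\Delta_0 := \Gamma \cup \{\pm\phi\}$, which is consistent by hypothesis, and build inductively an increasing chain of consistent sets $\Delta_0 \subseteq \Delta_1 \subseteq \cdots$ that at stage $n$ decides $\psi_n$: given a consistent $\Delta_n$, put
\[
\Delta_{n+1} \;:=\; \begin{cases} \Delta_n \cup \{\psi_n\} & \text{if this set is consistent,}\\ \Delta_n \cup \{\neg\psi_n\} & \text{otherwise.}\end{cases}
\]
Finally take $\Delta' := \bigcup_{n \in \mathbb{N}} \Delta_n$ and keep only its intersection with $\{\psi_i,\neg\psi_i \mid i \in \mathbb{N}\}$, which is the set we want to show is an atom.

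The key inductive invariant is that each $\Delta_n$ is consistent. The only nontrivial point is the step: if both $\Delta_n \cup \{\psi_n\}$ and $\Delta_n \cup \{\neg\psi_n\}$ were inconsistent, then by a routine propositional argument using \TAUT\ and \MP\ we could derive $\bot$ from $\Delta_n$ alone, contradicting the induction hypothesis. Hence at least one branch in the case split is always consistent, and the construction proceeds without interruption. Note that for those $\psi_i$ that are already decided by $\Delta_0$ (in particular every formula appearing in $\Gamma$, since $\Gamma \subseteq \Delta$, and $\phi$ itself), the branch matching $\Delta_0$'s decision is automatically the consistent one, so $\Delta_0 \subseteq \Delta'$.

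To see that $\Delta'$ is an atom, consistency follows from finitary derivability: any derivation of $\bot$ from $\Delta'$ uses only finitely many premises, all lying in some $\Delta_n$, which is consistent. For completeness of decisions, by construction either $\psi_i \in \Delta_{i+1}$ or $\neg\psi_i \in \Delta_{i+1}$ for every $i$, so $\Delta'$ contains exactly one of $\psi_i, \neg\psi_i$ for each $\psi_i \in \cl{\SFC}$. Finally, $(\Gamma \cup \{\pm\phi\}) \subseteq \Delta_0 \subseteq \Delta'$, proving the claim.

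This is a routine adaptation of the standard Lindenbaum argument; I do not expect any essential obstacle. The only point requiring a little care is the compatibility of the initial set $\Gamma \cup \{\pm\phi\}$ with the atom format, but this is immediate from $\Gamma \subseteq \Delta$ (so every element of $\Gamma$ is already of the form $\psi_i$ or $\neg\psi_i$) and $\phi \in \cl{\SFC}$.
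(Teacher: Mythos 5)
Your proposal is correct and follows essentially the same Lindenbaum-style construction as the paper: start from the consistent set $\Gamma\cup\{\pm\phi\}$, decide each formula of $\cl{\SFC}$ one at a time keeping consistency, and take the union. The only cosmetic difference is that the paper enumerates only the formulas of $\cl{\SFC}$ not already decided by $\Gamma\cup\{\pm\phi\}$, whereas you enumerate all of them and observe that the already-decided ones are handled automatically; both work.
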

\begin{proof}
	Let $\Gamma=\{\phi_k\mid k\in\mathbb{N}\}$.  
	Since $\Delta$ is an atom and $\Gamma\subseteq \Delta$, it follows that there is a set $\Gamma'=\{\chi_k\in\cl{\SFC}\mid k\in\mathbb{N}\}$ such that $\phi_k=\chi_k$ or $\phi_k=\neg\chi_k$ for all $k\in\mathbb{N}$.
	Let $\psi_1,\cdots,\psi_n,\cdots$ be all the formulas in $\cl{\SFC}\setminus\Gamma'\setminus\{\phi \}$. 
	We define $\Gamma_i$ as below.
	\begin{align*}
	\Gamma_0&=\Gamma\cup\{\pm\phi\}\\
	\Gamma_{i+1}&=\begin{cases}
	\Gamma_i\cup\{\psi_i\} & \text{ if } \Gamma_i\cup\{\psi_i\} \text{ is consistent}\\
	\Gamma_i\cup\{\neg\psi_i\} & \text{else}
	\end{cases}
	\end{align*}
	
	Firstly, we will show $\Gamma_i$ is consistent for all $i\in\mathbb{N}$. Since $\Gamma_0$ is consistent, we only need to show that if $\Gamma_i$ is consistent then $\Gamma_{i+1}$ is consistent, i.e.\ either $\Gamma_i\cup\{\psi_i\}$ or $\Gamma_i\cup\{\neg\psi_i\}$ is consistent. Assuming both $\Gamma_i\cup\{\psi_i\}$ and $\Gamma_i\cup\{\neg\psi_i\}$ are not consistent, it follows that $\Gamma_i\vdash\neg\psi_i$ and $\Gamma_i\vdash\psi_i$. That is, $\Gamma_i$ is inconsistent. Contradiction. Therefore, either $\Gamma_i\cup\{\psi_i\}$ or $\Gamma_i\cup\{\neg\psi_i\}$ is consistent.
	
	Let $\Delta'=\bigcup_{i\in\mathbb{N}}\Gamma_i$. It follows that $\Delta'$ is consistent. It is obvious that either $\psi\in\Delta'$ or $\neg\psi\in\Delta'$ for all $\psi\in\cl{\SFC}$. Therefore, $\Delta'$ is an atom of $\cl{\SFC}$.
\end{proof}

\begin{definition}
Given a subformula-closed $\SFC$, the canonical model $\M^{\SFC}=\lr{W,\ACT,\sim,\{\rel{x}\mid x\in\ACT\},V}$ is defined as:
	\begin{itemize}
		\item $W=\{\Delta\mid\Delta$ is an atom of $\cl{\SFC}\}$;
		\item $\ACT=\{\phi\mid\Kh\phi\in\SFC\}$;
		\item $\Delta\sim\Delta'$ iff $\Delta|_{\K}=\Delta'|_{\K}$;
		\item for each $\phi\in\ACT$, $\Delta\rel{\phi}\Delta'$ iff $\Kh\phi,\neg\K\phi\in\Delta$ and $\K\phi\in\Delta'$;
		\item for each $p\in\SFC$, $p\in V(\Delta)$ iff $p\in\Delta$.
	\end{itemize}
\end{definition}
\noindent Note that we use formulas that the agent knows how to achieve as the action labels, and we introduce an action transition if it is necessary, i.e., $\Kh\phi$ but $\neg \K\phi$ (empty strategy does not work). Requiring $\K\phi\in \Delta'$ is to reflect the first condition in the semantics of $\Kh$. Using \NECK, \DISTK\ and Proposition \ref{pro.lindenbaumForAtom}, it is routine to show the existence lemma for $\K$:
\begin{proposition}\label{pro.exeistenceLemmaForK}
	Let $\Delta$ be a state in $\M^{\SFC}$, and $\K\phi\in\cl{\SFC}$. If $\K\phi\not\in \Delta$ then there exists $\Delta'\in[\Delta]$ such that $\neg\phi\in\Delta'$.
\end{proposition}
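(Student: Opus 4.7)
My plan is to follow the standard $\mathbb{S}5$ pattern. I set $\Gamma=\Delta|_{\K}\cup\Delta|_{\neg\K}$, which is a subset of $\Delta$, and show that $\Gamma\cup\{\neg\phi\}$ is consistent. Proposition \ref{pro.lindenbaumForAtom} (applied with $\pm\phi=\neg\phi$) then yields an atom $\Delta'$ of $\cl{\SFC}$ containing $\Gamma\cup\{\neg\phi\}$. Because $\Delta'$ then contains every $\K$-formula in $\Delta$ and the negation of every $\K$-formula that is \emph{not} in $\Delta$, the equality $\Delta'|_{\K}=\Delta|_{\K}$ is immediate, so $\Delta'\sim\Delta$, and $\neg\phi\in\Delta'$ holds by construction. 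Note that $\phi\in\cl{\SFC}$ since $\K\phi\in\cl{\SFC}$ forces $\phi\in\SFC\subseteq\cl{\SFC}$ (using that $\SFC$ is subformula-closed).

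The heart of the argument is the consistency of $\Gamma\cup\{\neg\phi\}$, which I would prove by contraposition. If that set were inconsistent, then there would be finitely many $\K\psi_1,\ldots,\K\psi_n\in\Delta$ and $\neg\K\eta_1,\ldots,\neg\K\eta_m\in\Delta$ such that
\[
\vdash \K\psi_1\wedge\cdots\wedge\K\psi_n\wedge\neg\K\eta_1\wedge\cdots\wedge\neg\K\eta_m\to\phi.
\]
Applying \NECK to this tautology and then distributing $\K$ over the antecedent (via \DISTK and standard normal modal reasoning), one obtains
\[
\vdash \K\K\psi_1\wedge\cdots\wedge\K\K\psi_n\wedge\K\neg\K\eta_1\wedge\cdots\wedge\K\neg\K\eta_m\to\K\phi.
\]
Now \AxTransK supplies $\K\psi_i\to\K\K\psi_i$ and \AxEucK supplies $\neg\K\eta_j\to\K\neg\K\eta_j$, so the antecedent of the last implication is already derivable from the original conjuncts, which lie in $\Delta$. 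Consequently $\Delta\vdash\K\phi$, and since $\K\phi\in\cl{\SFC}$ and $\Delta$ is an atom, $\K\phi\in\Delta$, contradicting the hypothesis.

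The rest is bookkeeping. Proposition \ref{pro.lindenbaumForAtom} gives the desired atom $\Delta'$ with $\Delta|_{\K}\cup\Delta|_{\neg\K}\cup\{\neg\phi\}\subseteq\Delta'$. The inclusion $\Delta|_{\K}\subseteq\Delta'|_{\K}$ is direct, and for the reverse inclusion, any $\K\psi\in\Delta'$ not in $\Delta$ would force $\neg\K\psi\in\Delta|_{\neg\K}\subseteq\Delta'$, contradicting consistency of $\Delta'$. The main obstacle throughout the plan is the treatment of the negated conjuncts $\neg\K\eta_j$: without \AxEucK one cannot, after applying \NECK, re-derive $\K\neg\K\eta_j$ from $\neg\K\eta_j\in\Delta$ itself, which is precisely why negative introspection is indispensable here. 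Everything else is either a standard $\mathbb{S}5$ manipulation or a direct appeal to the defining properties of atoms of $\cl{\SFC}$.
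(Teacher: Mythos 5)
Your proof is correct and follows essentially the same route as the paper's: the same set $\Delta|_{\K}\cup\Delta|_{\neg\K}\cup\{\neg\phi\}$, the same consistency argument via \NECK\ and \DISTK\ followed by \AxTransK\ and \AxEucK\ to recover the boxed antecedent from $\Delta$, and the same appeal to Proposition \ref{pro.lindenbaumForAtom} to extract $\Delta'\in[\Delta]$. Your extra bookkeeping (checking $\phi\in\cl{\SFC}$ and verifying $\Delta'|_{\K}=\Delta|_{\K}$ in both directions) only makes explicit what the paper leaves implicit.
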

\begin{proof}
	Let $\Gamma=\Delta|_{\K}\cup\Delta|_{\neg\K}\cup\{\neg\phi\}$. $\Gamma$ is consistent. If not, there are $\K\phi_i,\cdots,\K\phi_n$ and $\neg\K\psi_1,\cdots ,\neg\K\psi_m$ in $\Delta$ such that $$\vdash\K\phi_1\land\cdots\land\K\phi_n\land\neg\K\psi_1\land\cdots \land \neg\K\psi_m\to\phi.$$
	Following by \NECK\ and \DISTK, we have 
	$$\vdash \K(\K\phi_i\land\cdots\land\K\phi_n\land\neg\K\psi_1\land\cdots \land \neg\K\psi_m)\to\K\phi.$$
	Since the epistemic operator $\K$ is distributive over $\land$ and $\vdash \K\K\phi_i\lra\K\phi_i$ for all $1\leq i\leq n$ and $\vdash \K\neg\K\psi_i\lra\neg\K\psi_i$ for all $1\leq i\leq m$, we have 
	$$\vdash\K\phi_i\land\cdots\land\K\phi_n\land\neg\K\psi_1\land\cdots \land \neg\K\psi_m\to\K\phi.$$
	Since $\K\phi_i,\cdots,\K\phi_n$ and $\neg\K\psi_1,\cdots ,\neg\K\psi_m$ are all in $\Delta$ and $\K\phi\in\cl{\SFC}$, it follows that $\K\phi\in\Delta$. It is contradictory with the assumption that $\K\phi\not\in\Delta$. Therefore, $\Gamma$ is consistent. It follows by Proposition \ref{pro.lindenbaumForAtom} that there exists an atom $\Delta'$ of $\cl{\SFC}$ such that $\Gamma\subseteq \Delta'$. Since $(\Delta|_{\K}\cup\Delta|_{\neg\K})\subseteq\Delta'$, we have $\Delta'\sim\Delta$, that is, $\Delta'\in[\Delta]$.
\end{proof}
\begin{proposition}\label{prop:AtomShareTheSameKh}
	Let $\Delta$ and $\Delta'$ be two states in $\M^{\SFC}$ such that $\Delta\sim\Delta'$. We have $\Delta|_{\Kh}=\Delta'|_{\Kh}$.
\end{proposition}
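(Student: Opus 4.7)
The plan is to reduce agreement on $\Kh$-formulas to the already-given agreement on $\K$-formulas by exploiting positive introspection $\AxKhtoKKh$ and truthfulness $\AxTrK$. By symmetry of the condition $\Delta|_\K=\Delta'|_\K$, it suffices to show the single direction $\Kh\phi\in\Delta\Rightarrow \Kh\phi\in\Delta'$.

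First I would record the key bookkeeping fact about the closure: since $\Kh\phi\in\Delta\subseteq \cl{\SFC}$ and the only formulas $\cl{\SFC}$ adds to $\SFC$ are of the form $\K\psi$, we must have $\Kh\phi\in\SFC$. Hence $\K\Kh\phi\in\cl{\SFC}$ by Definition~\ref{def.closure}. This is the one non-obvious ingredient and it is exactly what the closure was designed to guarantee.

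Next, I would argue $\K\Kh\phi\in\Delta$. Suppose not; since $\Delta$ is an atom of $\cl{\SFC}$ and $\K\Kh\phi\in\cl{\SFC}$, maximality forces $\neg\K\Kh\phi\in\Delta$. But $\AxKhtoKKh$ gives $\vdash \Kh\phi\to\K\Kh\phi$, so $\{\Kh\phi,\neg\K\Kh\phi\}$ is inconsistent, contradicting consistency of $\Delta$. Therefore $\K\Kh\phi\in\Delta|_{\K}=\Delta'|_{\K}$, so $\K\Kh\phi\in\Delta'$. Now applying the same style of maximality/consistency argument to $\Delta'$ using $\AxTrK$ (i.e.\ $\vdash \K\Kh\phi\to\Kh\phi$), together with $\Kh\phi\in\cl{\SFC}$, yields $\Kh\phi\in\Delta'$. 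The converse direction is symmetric, so $\Delta|_{\Kh}=\Delta'|_{\Kh}$.

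I do not expect any real obstacle: the argument is a standard maximal-consistent-set computation, and the only subtlety is the closure bookkeeping, which has been arranged in the definition of $\cl{\SFC}$ precisely so that $\K\Kh\phi$ is in the decidable fragment of the atom. Note that negative $\Kh$-formulas need no separate treatment, because $\Delta|_{\Kh}$ only collects positive $\Kh$-formulas, and equivalence of the positive parts already gives the claim (equivalently one could rerun the argument with the derived $\neg\Kh\phi\to\K\neg\Kh\phi$ established in the earlier proposition).
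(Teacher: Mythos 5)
Your proof is correct and follows essentially the same route as the paper's: observe that $\Kh\phi\in\SFC$ so $\K\Kh\phi\in\cl{\SFC}$, push $\Kh\phi$ up to $\K\Kh\phi$ in $\Delta$ via $\AxKhtoKKh$, transfer it across $\sim$ using $\Delta|_\K=\Delta'|_\K$, and bring it back down with $\AxTrK$. The only difference is that you spell out the maximality/consistency steps that the paper leaves implicit.
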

\begin{proof}
	For each $\Kh\phi\in\Delta$, by Definition \ref{def.closure}, $\Kh\phi\in\SFC$. Then $\K\Kh\phi\in\cl{\SFC}$.
	For each $\Kh\phi\in\Delta$, by Axiom \AxKhtoKKh , we have $\K\Kh\phi\in\Delta$. Since $\Delta\sim\Delta'$, then $\K\Kh\phi\in\Delta'$, and by Axiom \AxTrK, $\Kh\phi\in\Delta'$. Then we showed that $\Kh\phi\in\Delta$ implies $\Kh\phi\in\Delta'$. Similarly we can prove $\Kh\phi\in\Delta'$ implies $\Kh\phi\in\Delta$. Hence, $\Delta|_{\Kh}=\Delta'|_{\Kh}$.
\end{proof}
The following is a crucial observation for the latter proofs. 
\begin{proposition}\label{prop:AtomModelAllSuccessorHaskh}
	Let $\Delta$ be a state in $\M^{\SFC}$ and $\psi\in\ACT$ be executable at $[\Delta]$. If $\Kh\phi\in\Delta'$ for all $\Delta'$ with $[\Delta]\rel{\psi}[\Delta']$ then $\Kh\phi\in\Delta$.
\end{proposition}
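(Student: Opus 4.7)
The plan is to reduce the proposition to showing that $\vdash \K\psi \to \Kh\phi$, which then yields $\vdash \Kh\psi \to \Kh\phi$ by combining $\EQREPKh$, $\AxKhtoKhK$, and $\AxKhKh$, after which $\Kh\psi \in \Delta$ forces $\Kh\phi \in \Delta$.

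First I would unpack executability: there exist $\Delta_1 \in [\Delta]$ and $\Delta_2 \in W$ with $\Delta_1 \rel{\psi} \Delta_2$, so by the canonical definition of $\rel{\psi}$ we have $\Kh\psi, \neg\K\psi \in \Delta_1$ and $\K\psi \in \Delta_2$. Since $\Delta \sim \Delta_1$, Proposition~\ref{prop:AtomShareTheSameKh} immediately gives $\Kh\psi \in \Delta$. Note also that $\Kh\phi \in \cl{\SFC}$ (it sits in some $\Delta'$ by the hypothesis, which is nonvacuous by executability), and $\psi \in \SFC$ by subformula closure of $\SFC$, so $\K\psi \in \cl{\SFC}$.

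Next I would observe that every atom $\Delta^{*}$ of $\cl{\SFC}$ that contains $\K\psi$ must contain $\Kh\phi$. Indeed, for such $\Delta^{*}$, the witness $\Delta_1$ together with $\Delta^{*}$ satisfies the defining clauses of $\rel{\psi}$ (the conditions $\Kh\psi, \neg\K\psi$ hold at $\Delta_1$, and $\K\psi$ holds at $\Delta^{*}$), so $[\Delta] \rel{\psi} [\Delta^{*}]$, and the hypothesis gives $\Kh\phi \in \Delta^{*}$.

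From this I would derive $\vdash \K\psi \to \Kh\phi$ contrapositively: if $\{\K\psi, \neg\Kh\phi\}$ were consistent, a Lindenbaum-style extension exactly as in the proof of Proposition~\ref{pro.lindenbaumForAtom} (enumerating $\cl{\SFC}$ and deciding each formula) would produce an atom containing both $\K\psi$ and $\neg\Kh\phi$, contradicting the previous paragraph. Then $\EQREPKh$ yields $\vdash \Kh\K\psi \to \Kh\Kh\phi$, chaining with $\AxKhtoKhK$ ($\vdash \Kh\psi \to \Kh\K\psi$) and $\AxKhKh$ ($\vdash \Kh\Kh\phi \to \Kh\phi$) gives $\vdash \Kh\psi \to \Kh\phi$. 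Since $\Kh\psi \in \Delta$ and $\Delta$ is a consistent atom deciding $\Kh\phi$, we conclude $\Kh\phi \in \Delta$.

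The main obstacle is the second step: recognising that, once executability pins down the $\Kh\psi$ and $\neg\K\psi$ components of the definition of $\rel{\psi}$ at every member of $[\Delta]$, the quantification "for all $\Delta'$ with $[\Delta] \rel{\psi} [\Delta']$" collapses to "for all atoms $\Delta^{*}$ with $\K\psi \in \Delta^{*}$". Once this collapse is made explicit, the Lindenbaum argument plus the three axioms/rule assemble into the desired implication.
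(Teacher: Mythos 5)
Your proposal is correct and follows essentially the same route as the paper's own proof: establish $\vdash \K\psi\to\Kh\phi$ by showing (via executability, the definition of $\rel{\psi}$, and a Lindenbaum-style extension) that $\{\K\psi,\neg\Kh\phi\}$ is inconsistent, then chain $\EQREPKh$, $\AxKhtoKhK$ and $\AxKhKh$ to get $\vdash\Kh\psi\to\Kh\phi$ and conclude from $\Kh\psi\in\Delta$. The only cosmetic difference is that you work through an explicit witness $\Delta_1\in[\Delta]$ where the paper transfers $\Kh\psi,\neg\K\psi$ directly to $\Delta$ via Proposition~\ref{prop:AtomShareTheSameKh} and the definition of $\sim$; both are fine.
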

\begin{proof}
	First, we show that $\K\psi$ is not consistent with $\neg\Kh \phi$. 
	It is obvious that $\Kh\phi\in\cl{\SFC}$.
	Since $\psi$ is executable at $[\Delta]$, there are atoms $\Gamma_1$ and $\Gamma_2$ s.t. $\Gamma_1\rel{\psi}\Gamma_2$. Then $\K\psi\in\Gamma_2$.  
	Assuming that $\K\psi$ is consistent with $\neg\Kh \phi$, by Proposition \ref{pro.lindenbaumForAtom} there exists an atom $\Gamma$ of $\cl{\SFC}$ s.t. $\{\K\psi,\neg\Kh\phi \}\subseteq \Gamma$. Since $\psi\in\ACT$ is executable at $[\Delta]$, then by definition of $\rel{\psi},\sim$ and Proposition \ref{prop:AtomShareTheSameKh}, $\Kh\psi,\neg\K\psi\in\Delta$. It follows that $\Delta\rel{\psi}\Gamma$, then $[\Delta]\rel{\psi}[\Gamma]$. This is contradictory with the assumption that $\Kh\phi\in\Delta'$ for all $\Delta'$ with $[\Delta]\rel{\psi}[\Delta']$. Then $\K\psi$ is not consistent with $\neg\Kh \phi$. Hence, $\vdash \K\psi\to \Kh\phi$. 
	
	Since $\vdash \K\psi\to \Kh\phi$, it follows by Rule \EQREPKh\ and Axiom \AxKhtoKhK\ that $\vdash\Kh\psi\to \Kh\Kh\phi$. Moreover, it follows by Axiom \AxKhKh\ that $\vdash\Kh\psi\to \Kh\phi$. Since ${\psi}$ is executable at $[\Delta]$, 
	it follows by the definition of $\rel{\psi}$ and Proposition \ref{prop:AtomShareTheSameKh} that $\Kh\psi\in\Delta$. Therefore, we have $\Kh\phi\in\Delta$.
\end{proof}
\begin{lemma}
	For each $\phi\in\cl{\SFC}$,  $\M^{\SFC},\Delta\vDash\phi$ iff $\phi\in\Delta$.
\end{lemma}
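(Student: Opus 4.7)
The plan is to prove the lemma by induction on the structure of $\phi$. The cases $\phi = p$, $\phi = \neg\psi$, and $\phi = \psi_1 \wedge \psi_2$ are routine. For $\phi = \K\psi$: the right-to-left direction, $\K\psi \in \Delta \Rightarrow \M^{\SFC},\Delta \vDash \K\psi$, follows because $\sim$ preserves $\K$-formulas, so $\psi \in \Delta'$ for every $\Delta' \in [\Delta]$ via \AxTrK, after which the IH applies. The left-to-right direction is the standard ``existence lemma'' argument: if $\K\psi \notin \Delta$, use Proposition \ref{pro.exeistenceLemmaForK} to pick $\Delta' \in [\Delta]$ with $\neg\psi \in \Delta'$ and invoke the IH. Everything nontrivial happens in the case $\phi = \Kh\psi$.

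For the direction $\Kh\psi \in \Delta \Rightarrow \M^{\SFC},\Delta \vDash \Kh\psi$, I would build a witnessing strategy by cases. If $\K\psi \in \Delta$, then by the argument above $[\Delta] \subseteq \llrr{\psi}$, and the empty strategy works. Otherwise $\Kh\psi, \neg\K\psi \in \Delta$, and I would take $\sigma([\Delta]) = \psi$ (note $\psi \in \ACT$ because $\Kh\psi \in \SFC$, being a member of $\cl{\SFC}$ that is not of the form $\K\chi$), leaving $\sigma$ undefined elsewhere. The delicate point is executability of $\psi$ at every atom of $[\Delta]$, which reduces to producing some atom containing $\K\psi$. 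Consistency of $\K\psi$ comes from the interplay of three axioms: by \AxKhtoKhK\ we have $\Kh\K\psi \in \Delta$, so if $\vdash \neg\K\psi$ then \EQREPKh\ gives $\vdash \Kh\K\psi \to \Kh\bot$ and \AxKhbot\ then $\vdash \Kh\K\psi \to \bot$, contradicting consistency of $\Delta$. A Lindenbaum-style extension (as in Proposition \ref{pro.lindenbaumForAtom}) then furnishes an atom $\Xi$ with $\K\psi \in \Xi$, serving as a $\psi$-successor of every atom in $[\Delta]$ (using Proposition \ref{prop:AtomShareTheSameKh} and the uniformity of $\K$ on $[\Delta]$). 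Every complete $\sigma$-execution from $[\Delta]$ has length at most two and lands at some $[\Xi']$ with $\K\psi \in \Xi'$, giving $[\Xi'] \subseteq \llrr{\psi}$ by \AxTrK\ and the IH.

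For the converse, suppose $\M^{\SFC},\Delta \vDash \Kh\psi$ is witnessed by a strategy $\sigma$; I would argue by contradiction. Call an equivalence class $[t]$ \emph{bad} if $\Kh\psi$ lies in no atom of $[t]$ (equivalently, by Proposition \ref{prop:AtomShareTheSameKh}, in none); assume $[\Delta]$ is bad. At any leaf $[t] \in \CELeafN(\sigma,s)$, the IH applied via $[t] \subseteq \llrr{\psi}$ yields $\psi \in \Gamma$ for every $\Gamma \in [t]$; the contrapositive of Proposition \ref{pro.exeistenceLemmaForK} then forces $\K\psi \in \Gamma$, and \AxKtoKh\ gives $\Kh\psi \in \Gamma$, so no leaf is bad. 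Hence every bad $[t]$ lies in $\Dom(\sigma)$, and the contrapositive of Proposition \ref{prop:AtomModelAllSuccessorHaskh} (applied to an atom of $[t]$ lacking $\Kh\psi$, with action $\chi = \sigma([t])$) supplies a $\chi$-successor class that is again bad. Iterating by dependent choice from $[\Delta]$ yields an infinite complete $\sigma$-execution from $[\Delta]$, contradicting the assumption that all such executions are finite.

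The main obstacle is the $\Kh$ case. In the left-pointing direction, the subtle combination of \AxKhtoKhK, \EQREPKh, and \AxKhbot\ is what makes the action $\psi$ executable in the canonical model; this is precisely the point where these axioms earn their place. In the right-pointing direction, Proposition \ref{prop:AtomModelAllSuccessorHaskh} already packages the hard work of propagating $\Kh\psi$ backwards, and the only additional ingredient is finiteness of complete executions, which is what terminates the dependent-choice construction.
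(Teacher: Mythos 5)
Your proposal is correct and follows essentially the same route as the paper's own proof: the same case split on $\K\psi\in\Delta$ versus $\neg\K\psi\in\Delta$ with the one-step strategy $\{[\Delta]\mapsto\psi\}$ whose executability rests on the consistency of $\K\psi$ via \AxKhtoKhK, \EQREPKh\ and \AxKhbot, and the same converse argument propagating $\Kh\psi$ backwards from the leaf nodes via Proposition \ref{prop:AtomModelAllSuccessorHaskh} and dependent choice to contradict finiteness of complete executions. The only cosmetic difference is that the paper states the dependent-choice step as a separate claim about arbitrary inner nodes, whereas you run it directly from $[\Delta]$.
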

\begin{proof}
	We prove it by induction on $\phi$. We only focus on the case of  $\Kh\phi$; the other cases are straightforward, e.g., $\K\phi$ case can  proved based on Proposition \ref{pro.exeistenceLemmaForK}.
Note that if $\Kh\phi\in cl(\Phi)$ then $\phi\in cl(\Phi)$ thus by Definition \ref{def.closure} $\K\phi\in\cl{\SFC}$.
		
		\textbf{Right to Left}: If $\Kh\phi\in\Delta$, we will show $\M^{\SFC},\Delta\vDash\Kh\phi$.  Firstly, there are two cases: $\K\phi\in \Delta$ or $\K\phi\not\in\Delta$. If $\K\phi\in\Delta$, then $\K\phi,\phi\in\Delta'$ for all $\Delta'\in[\Delta]$. Since $\phi\in\SFC$, it follows by IH that $\M^{\SFC},\Delta'\vDash\phi$ for all $\Delta'\in[\Delta]$. Therefore, $\M^{\SFC},\Delta\vDash\K\phi$. It follows by Axiom \AxKtoKh\ and the soundness of $\SKh$ that $\M^{\SFC},\Delta\vDash\Kh\phi$. If $\neg\K\phi\in\Delta$, we first show that $\K\phi$ is consistent. If not, namely $\vdash\K\phi\to\bot$, it follows by Rule \EQREPKh\ that $\vdash \Kh\K\phi\to\Kh\bot$. It follows by Axiom \AxKhbot\ that $\vdash\Kh\K\phi\to\bot$. Since $\Kh\phi\in\Delta$, it follows by Axiom \AxKhtoKhK\ that $\Delta\vdash\bot$, which is contradictory with the fact that $\Delta$ is consistent. Therefore, $\K\phi$ is consistent. 
		
		By Proposition \ref{pro.lindenbaumForAtom} there exists an atom $\Delta'$ s.t. $\K\phi\in\Delta'$. Note that $\phi\in\ACT$. Thus, we have $\Delta\rel{\phi}\Delta'$, then $[\Delta]\rel{\phi}[\Delta']$. Let $[\Delta'']$ be an equivalence class s.t. $[\Delta] \rel{\phi}[\Delta'']$, which indicates $\Gamma\rel{\phi}\Gamma''$ for some $\Gamma\in[\Delta]$ and $\Gamma''\in[\Delta'']$. By definition of $\rel{\phi}$ and $\sim$ we get $\K\phi\in\Theta$ for all $\Theta\in[\Delta'']$. By IH, $\M^{\SFC},\Theta\vDash\phi$ for all $\Theta\in[\Delta'']$, namely $[\Delta'']\subseteq\llrr{\phi}$. 
		Moreover, $\rel{\phi}$ is not a loop on $[\Delta]$ because $\neg\K\phi\in\Delta$. 
		Thus, the partial function $\sigma=\{[\Delta]\mapsto \phi \}$ is a strategy s.t. all its complete executions starting from $[\Delta]$ are finite and $[\Delta'']\subseteq \llrr{\phi}$ for each $[\Delta'']\in\CELeafN(\sigma,\Delta)$. Then, $\M^{\SFC},\Delta\vDash\Kh\phi$.
		
		\textbf{Left to Right}: Suppose $\M^{\SFC},\Delta\vDash\Kh\phi$, we will show $\Kh\phi\in\Delta$. By the semantics, there exists a strategy $\sigma$ s.t. all $\sigma$'s complete executions starting from $[\Delta]$ are finite and $[\Gamma]\subseteq\llrr{\phi}$ for all $[\Gamma]\in\CELeafN(\sigma,\Delta)$. By IH, $\phi\in\Gamma'$ for all $\Gamma'\in[\Gamma]$ and $[\Gamma]\in\CELeafN(\sigma,\Delta)$. By Proposition \ref{pro.exeistenceLemmaForK}, we get $\K\phi\in\Gamma$ for all $[\Gamma]\in\CELeafN(\sigma,\Delta)$. By Axiom \AxKtoKh\ and Proposition \ref{prop:AtomShareTheSameKh}, $\Kh\phi\in\Gamma$ for all $[\Gamma]\in\CELeafN(\sigma,\Delta)$. 
		
		If $[\Delta]\not\in\Dom(\sigma)$, it is obvious that $\Kh\phi\in\Delta$ because $[\Delta]\in\CELeafN(\sigma,\Delta)$. Next, we consider the case of $[\Delta]\in\Dom(\sigma)$, then $[\Delta]\in\CEInnerN(\sigma,\Delta)$.	
		In order to show $\Kh\phi\in\Delta$, we will show a more strong result that $\Kh\phi\in\Delta'$ for all $[\Delta']\in\CEInnerN(\sigma,\Delta)$. Firstly, we show the following claim:
		\begin{claim}\label{claim:infiniteExecution}
			If there exists $[\Delta']\in\CEInnerN(\sigma,\Delta)$ such that $\neg\Kh\phi\in\Delta'$ then there exists an infinite execution of $\sigma$ starting from $[\Delta]$.
		\end{claim}
		\begin{claimproof}
			Let $X$ be the set $\{[\Theta]\in\CEInnerN(\sigma,\Delta)\mid \neg\Kh\phi\in\Theta \}$. It follows that $[\Delta']\in X$ and $X\subseteq\Dom(\sigma)$. We define a binary relation $R$ on $X$ as $R=\{([\Theta],[\Theta'])\mid [\Theta]\rel{\sigma([\Theta])}[\Theta'] \}$.
			
			For each $[\Theta]\in X$, we have that $\sigma([\Theta])$ is executable at $[\Theta]$. Since $\neg\Kh\phi\in\Theta$, by Proposition \ref{prop:AtomModelAllSuccessorHaskh} there exists an atom $\Theta'$ s.t. $[\Theta]\rel{\sigma([\Theta])}[\Theta']$ and $\neg\Kh\phi\in\Theta'$. Since $\Kh\phi\in\Gamma $ for all $[\Gamma]\in\CELeafN(\sigma,\Delta)$ and $[\Theta]\in\CEInnerN(\sigma,\Delta)$,we have $[\Theta']\in\CEInnerN(\sigma,\Delta)$. Then $[\Theta']\in X$. Therefore, $R$ is an entire binary relation on $X$, namely for each $[\Theta]\in X$ there is $[\Theta']\in X$ such that $([\Theta],[\Theta'])\in R$. Then by Axiom of Dependent Choice there exists an infinite sequence $[\Theta_0][\Theta_1]\cdots$ s.t. $([\Theta_n],[\Theta_{n+1}])\in R$ for all $n\in\mathbb{N}$.
			
From the definition of $R$, $[\Theta_0][\Theta_1]\cdots$ is a complete execution of $\sigma$ starting from $[\Theta_0]$. Since $[\Theta_0]\in\CEInnerN(\sigma,\Delta)$ and all complete execution of $\sigma$ from $[\Delta]$ are finite, there is a possible execution $[\Delta_0]\cdots[\Delta_j]$ for some $j\in\mathbb{N}$ s.t. $[\Delta_0]=[\Delta]$ and $[\Delta_j]=[\Theta_0]$. Therefore, $[\Delta_0]\cdots[\Delta_j][\Theta_1]\cdots$ is an infinite complete execution of $\sigma$ from $[\Delta]$.
		\end{claimproof}	

Therefore, we have $\Kh\phi\in\Delta'$ for all $[\Delta']\in\CEInnerN(\sigma,s)$. Otherwise, by claim \ref{claim:infiniteExecution} there is an infinite complete execution given $\sigma$ from $[\Delta]$. This is contradictory with all $\sigma$'s complete execution from $[\Delta]$ are finite, then $\Kh\phi\in\Delta'$ for all $[\Delta']\in\CEInnerN(\sigma,s)$. Since $[\Delta]\in\Dom(\sigma)$, we get $[\Delta]\in\CEInnerN(\sigma,\Delta)$. Then $\Kh\phi\in\Delta$.		
\end{proof}
Now let $\Phi$ be the set of all formulas, then each maximal consistent set $\Delta$ is actually an atom which satisfies all its formulas in $\M^\Phi$, according to the above truth lemma. Completeness then follows immediately. 
\begin{theorem}
$\SKh$ is strongly complete.
\end{theorem}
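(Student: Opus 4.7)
The plan is to derive strong completeness as a straightforward consequence of the canonical model construction and the truth lemma already established, by instantiating the construction with $\SFC$ equal to the full language. The key contrapositive to aim at is: every consistent set of formulas is satisfiable. From this, strong completeness follows in the standard way, since $\Gamma \not\vdash \phi$ entails that $\Gamma \cup \{\neg\phi\}$ is consistent, hence satisfiable, so $\Gamma \not\vDash \phi$.

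First I would let $\SFC$ be the set of all formulas of the language. This set is trivially subformula-closed, and by Definition \ref{def.closure} we have $\cl{\SFC} = \SFC$, so the atoms of $\cl{\SFC}$ coincide with the maximal consistent sets of $\SKh$. The canonical model $\M^{\SFC}$ is then well-defined as in the construction above, with worlds being all maximal consistent sets. Note that Proposition \ref{pro.lindenbaumForAtom}, applied with $\Gamma = \emptyset$ and any consistent $\pm\phi$, already gives the usual Lindenbaum lemma for this setting: every consistent set extends to a maximal consistent set (equivalently, an atom of $\cl{\SFC}$).

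Next, given any consistent set $\Gamma$ of formulas, I would use a straightforward transfinite/countable enumeration of the language (mirroring the construction inside the proof of Proposition \ref{pro.lindenbaumForAtom}) to extend $\Gamma$ to an atom $\Delta$ of $\cl{\SFC}$ containing $\Gamma$. Applying the truth lemma to $\M^{\SFC}$ yields $\M^{\SFC}, \Delta \vDash \phi$ for every $\phi \in \Delta$, and in particular for every $\phi \in \Gamma$. Hence $\Gamma$ is satisfiable.

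I do not expect a significant obstacle here: the heavy lifting, in particular the truth lemma for $\Kh$ relying on Propositions \ref{prop:AtomShareTheSameKh} and \ref{prop:AtomModelAllSuccessorHaskh} and Claim \ref{claim:infiniteExecution}, is already done and does not depend on $\SFC$ being finite. The only small point to double-check is that the canonical model construction and the truth lemma proof go through unchanged when $\SFC$ is infinite; inspecting the arguments shows that $\SFC$ being countable (which it is, since the language is countable) suffices for the Lindenbaum-style enumeration, and no finiteness was used in the truth lemma. The theorem then follows by combining satisfiability of consistent sets with soundness (Theorem \ref{them:soundness}) in the usual manner.
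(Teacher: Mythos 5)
Your proposal is correct and matches the paper's own argument: the paper likewise instantiates $\Phi$ with the set of all formulas, observes that atoms of $\cl{\SFC}$ are then exactly the maximal consistent sets, and invokes the truth lemma to conclude that every consistent set is satisfiable in $\M^\Phi$, whence strong completeness follows in the standard way. No gaps.
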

Note that if $\Phi$ is the set of all subformulas of a given formula $\phi$, then $cl(\Phi)$ is still finite. Due to the soundness of $\SKh$ and Proposition \ref{pro.lindenbaumForAtom}, a satisfiable formula $\phi$ must be consistent thus appearing in some atom, and thus $\phi$ is satisfiable in $\M^\Phi$. It is not hard to see that  $|\M^\Phi|\leq 2^{2|\phi|}$ where $2|\phi|$ is the bound on the size of $cl(\Phi)$. This gives us a small model property of our logic, then decidability follows.
\begin{theorem}
	$\SKh$ is decidable. 
\end{theorem}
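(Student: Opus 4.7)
The plan is to derive decidability from a small model property extracted directly from the canonical model construction used for completeness. Given a formula $\phi$ whose satisfiability we want to decide, I would take $\Phi = \sub(\phi)$, which is finite with $|\Phi| \leq |\phi|$. By Definition \ref{def.closure}, $\cl{\Phi} = \Phi \cup \{\K\psi \mid \psi \in \Phi\}$ contains at most $2|\phi|$ formulas, hence is finite. Instantiating the canonical model $\M^\Phi$ with this finite closure yields at most $2^{|\cl{\Phi}|} \leq 2^{2|\phi|}$ atoms (each corresponding to a consistent sign-choice over $\cl{\Phi}$), and the action set $\ACT = \{\psi \mid \Kh\psi \in \Phi\}$ is likewise finite. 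Thus $\M^\Phi$ is a finite model.

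Next I would chain the soundness theorem with the truth lemma, both of which have already been established for arbitrary subformula-closed $\Phi$. Suppose $\phi$ is satisfiable in some pointed model. Then by Theorem \ref{them:soundness} the formula $\phi$ is consistent, so by Proposition \ref{pro.lindenbaumForAtom} there is an atom $\Delta$ of $\cl{\Phi}$ containing $\phi$, and the truth lemma yields $\M^\Phi, \Delta \vDash \phi$. Conversely, satisfaction in any model witnesses satisfiability. Hence $\phi$ is satisfiable if and only if it is satisfied in some model of size at most $2^{2|\phi|}$ over the finite action set $\ACT$, which is the desired small model property.

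Finally, decidability of satisfiability follows by effectively enumerating all candidate models up to this size bound and performing model checking at each state. The only nontrivial semantic clause is $\Kh\psi$, but on a finite model this reduces to searching the finitely many uniform-executable partial functions $\sigma : [W] \to \ACT$ and verifying the two conditions: that every leafnode of $\CELeafN(\sigma,s)$ lies in $\llrr{\psi}$, and that $\sigma$ has no infinite complete execution from $[s]$ (equivalently, the reachable $\sigma$-induced subgraph on equivalence classes is acyclic). Both checks are clearly computable. I do not anticipate a serious obstacle here; the main points to verify are simply that the truth lemma applies unchanged to the finite subformula closure and that the $\Kh$ clause is algorithmically decidable on finite models, both of which are essentially immediate inspection steps from what has already been proved.
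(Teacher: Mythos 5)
Your proposal is correct and follows essentially the same route as the paper: restrict to the finite closure $\cl{\sub{\phi}}$, build the finite canonical model of size at most $2^{2|\phi|}$, and combine soundness, Proposition \ref{pro.lindenbaumForAtom} and the truth lemma to obtain the small model property, from which decidability follows by enumeration and model checking. Your added remark that the $\Kh$ clause is checkable on finite models (finitely many uniform strategies, acyclicity test for termination) is a detail the paper leaves implicit, but it does not change the argument.
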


\section{Conclusion}
In this paper we propose an epistemic logic of both (goal-directed) knowing how and knowing that, and capture the interaction of the two. 
We also present an proof system $\SKh$ for this logic. We have shown that $\SKh$ is sound and complete w.r.t.\ the semantics, and that $\SKh$ is decidable. We hope the axioms are illuminating towards a better understanding of knowing how. 

Note that we do not impose any special  properties between the interaction of $\rel{a}$ and $\sim$ in the models so far. In the future, it would be interesting to see whether assuming properties of \textit{perfect recall} and\slash or \textit{no learning} (cf.\ e.g., \cite{RAK,WangLi12}) can change the logic or not. 

Our notion of knowing how is relatively strong, particularly evidenced by the axiom $\AxKhtoKhK: \Kh\phi\to\Kh\K \phi$, which is due to the first condition of our semantics for $\Kh$,  inspired by planning with uncertainty. We believe it is reasonable for the scenarios where the agent has perfect recall (or, say, never forgets), which is usually assumed implicitly in the discussions on planning (cf. \cite{YLW15}). However, for a forgetful agent it may not be intuitive anymore, e.g., I know how to get drunk when sober  but I may not know how to get to the state that I know I am drunk, assuming drunk people do not know they are drunk. 
The axiom  $\AxKhKh$ is also interesting in distinguishing different types of knowing how. We have been focusing on the goal-directed knowing how \cite{Gochet13}, but for other types of knowing how such as knowing how to swim, the axiom may not be reasonable anymore, e.g., I know how to let myself to know how to swim (by registering an excellent swimming course) does not mean that I know how to swim right now. We leave the discussion of other types of knowing how in the future. Another obvious next step is to consider  knowing how in multi-agent settings.  

\bibliography{kh}
\bibliographystyle{named}
\end{document}